\definecolor{cvprblue}{rgb}{0.21,0.49,0.74}
\title{Test-Time Domain Generalization via Universe Learning: A Multi-Graph Matching Approach for Medical Image Segmentation}
\author{Xingguo Lv\textsuperscript{1,2}~ Xingbo Dong\textsuperscript{1}\footnotemark[2] ~ Liwen Wang\textsuperscript{1} ~ Jiewen Yang\textsuperscript{3} ~ Lei Zhao\textsuperscript{2}\\  Bin Pu\textsuperscript{3}\footnotemark[2] ~ Zhe Jin\textsuperscript{1} ~ Xuejun Li\textsuperscript{1}\\
\textsuperscript{1} Anhui Provincial International Joint Research Center for Advanced Technology in Medical Imaging,\\ Anhui University.
\textsuperscript{2} Hunan University.
\textsuperscript{3} The Hong Kong University of Science and Technology.\\
{\tt\small lvxg@stu.ahu.edu.cn, xingbo.dong@ahu.edu.cn, eebinpu@ust.hk}
}
\begin{document}
\maketitle
\begin{abstract}
Despite domain generalization (DG) has significantly addressed the performance degradation of pre-trained models caused by domain shifts, it often falls short in real-world deployment. Test-time adaptation (TTA), which adjusts a learned model using unlabeled test data, presents a promising solution. However, most existing TTA methods struggle to deliver strong performance in medical image segmentation, primarily because they overlook the crucial prior knowledge inherent to medical images. To address this challenge, we incorporate morphological information and propose a framework based on multi-graph matching. 
Specifically, we introduce learnable universe embeddings that integrate morphological priors during multi-source training, along with novel unsupervised test-time paradigms for domain adaptation. This approach guarantees cycle-consistency in multi-matching while enabling the model to more effectively capture the invariant priors of unseen data, significantly mitigating the effects of domain shifts.
Extensive experiments demonstrate that our method outperforms other state-of-the-art approaches on two medical image segmentation benchmarks for both multi-source and single-source domain generalization tasks. 
The source code is available at 
\href{https://github.com/Yore0/TTDG-MGM}
{https://github.com/Yore0/TTDG-MGM}.
\end{abstract}

\renewcommand{\thefootnote}{\fnsymbol{footnote}}
\footnotetext[2]{Corresponding authors.}
\section{Introduction}
\label{sec:intro}

\begin{figure}[!t]
    \centering
    \includegraphics[width=0.98\linewidth]{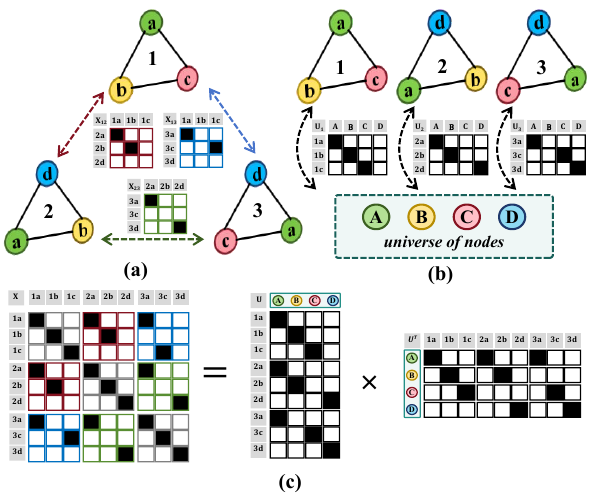}
    \caption{Illustration of multi-graph matching using the \textit{universe of nodes}.
 (a) Direct pairwise matching between graphs 1, 2, and 3. (b) Multi-graph matching with the \textit{universe of nodes}. (c) Equivalence between (a) and (b) (see \textit{Lemma} 1 in Sec.~\ref{lemma1} for details).}
    \vspace{-10pt}
    \label{fig:moti}
\end{figure}

In the field of medical image processing, semantic segmentation is a technique that enables the precise identification and quantitative analysis of target regions. 
Currently, a considerable number of medical models can effectively perform segmentation tasks~\cite{azad2024medical,ma2024segment}. Nevertheless, when pretrained on source datasets, their performance often declines in real-world deployment due to domain shifts~\cite{moreno2012unifying,zhang2021adaptive} caused by differences in imaging devices, protocols, patient demographics, and preprocessing methods, which impact the model's generalization ability.
To address domain shifts, previous researches in domain adaptation, such as domain generalization (DG)~\cite{wang2022generalizing,zhou2022domain,yoon2023domain}, have focused on designing sophisticated models that train jointly on source and target domains or across multiple styled domains.
Despite the progress, these methods fall short in practice, as no source domains can fully capture all real-world variations, and retraining for each new data is impractical. Consequently, a more practical approach is to adaptively fine-tune pre-train model using the information from the unlabeled unseen data during the test, which is referred to test-time adaptation (TTA)~\cite{liang2024comprehensive}.


Medical images, due to their reliance on specific physical principles (such as X-ray absorption~\cite{de2001high}, magnetic resonance~\cite{slichter2013principles}, sound wave reflection~\cite{ter2007therapeutic}, etc.), are fundamentally different from images capturing visible light in natural scenes. Moreover, the physiological characteristics of specific anatomical regions exhibit relatively stable shapes and spatial layouts, and the morphology of these structures is predictable~\cite{pu2024m3,pmlr-v235-pu24b}. These prior insights provide valuable information that is particularly useful in medical imaging tasks. 
In areas such as segmentation~\cite{yang2023graphecho,liu2022single}, registration~\cite{kim2022diffusemorph}, and detection~\cite{pu2024m3,pmlr-v235-pu24b}, many well-established approaches have been proposed that combine above handcrafted designs with neural network-based methods.

We combine the aforementioned TTA strategies with the prior knowledge of medical images and design a novel Test-Time Domain Generalization (TTDG) framework utilizing graph structures.
The construction of graph enables a more robust representation of the morphological priors inherent to medical images. In complex medical scenarios, the aggregation mechanism of nodes and edges in graph ensures the effective preservation of crucial contextual information. Previous research has utilised pairwise graph matching~\cite{li2023sigma++,sarlin2020superglue,pu2024m3} to address domain shifts in medical imaging. However, these methods are limited to aligning features between two domains, which poses challenges in scenarios involving multiple domains commonly encountered in real-world settings. Multi-graph matching overcomes these limitations by establishing connections between multiple graphs and integrating information from diverse domains, thereby enabling the capture of more intricate patterns and structural relationships while maintaining cross-domain consistency. This process facilitates the acquisition of global invariant features from medical images.

In multi-graph matching, the \textit{universe of nodes}~\cite{pachauri2013solving,tron2017fast} represents the full-set of the nodes from all graphs (refer to Fig.~\ref{fig:moti}). The aforementioned morphological priors are embedded into this universe, derived from a multitude of hospitals, devices, and modalities. It offers three main advantages: (1) \textbf{Cross-domain consistency.} By mapping the nodes from different graphs into the universe, we ensure that all graphs are aligned according to the same reference standard, i.e. the priors. (2) \textbf{Cycle-consistency in multi-graph matching.} This guarantees global coherence across multiple graphs, avoiding conflicts in local node alignments. (3) \textbf{Handling partial matching and missing nodes.} Even when certain nodes are absent in some graphs, matching them to virtual nodes in the universe addresses the issue, enhancing robustness to incomplete graph structures. 

Our contributions can be summarized as follows:

\begin{itemize}
\item We propose the first (to our best knowledge) multi-graph matching framework for test-time domain generalization in medical image segmentation tasks, effectively mitigating performance degradation caused by domain shifts during the testing phase.

\item  By designing learnable universe embeddings, we integrate the morphological priors of medical images into the graph matching process during source training, while ensuring the cycle-consistency constraint. This approach enables joint optimization and promotes the learning of domain-invariant features.

\item We design a novel, well-initialized unsupervised testing adaptation paradigm that integrates prior knowledge and allows seamless deployment during adaptation, effectively addressing domain shifts. 

\item  We conduct extensive experiments on two typical medical datasets, demonstrating that our method performs competitively against state-of-the-art approaches for both multi-source and single-source domain generalization.

\end{itemize}

\section{Preliminary \& Related Work}

The following section will introduce the preliminaries of multi-graph matching, while also reviewing the most relevant existing literature on the subject. Subsequently, domain generalization and test-time adaptation will be discussed.
\subsection{Multi-Graph Matching}
We consider $m \in \mathbb{N}$ different graphs $\mathcal{G}_1, \mathcal{G}_2, \cdots, \mathcal{G}_m$, $i \in [m]$, $[m] := \{1,\cdots,m\}$. For each $\mathcal{G}_i=(\mathcal{V}_i, \mathcal{A}_i)$, $\mathcal{V}_i\in \mathbb{R}^{n_i \times h}$ represents the $h$-dimensional feature of $n_i$ nodes, and adjacency matrix $\mathcal{A}_i\in \mathbb{R}^{n_i \times n_i}$ encodes the connectivity between nodes, represented as the set of edges. For any two graphs $\mathcal{G}_i$ and $\mathcal{G}_j$, the set of $n_i \times n_j$ partial permutation matrices $\mathbb{P}_{n_i n_j}$ is defined as 
\begin{equation}
    \mathbb{P}_{n_i n_j} := \{\bold{X}\in \{0,1\}^{n_i\times n_j}: \bold{X}\bold{1}_{n_j} \leq \bold{1}_{n_i}, \bold{X}^{\mathsf{T}}\bold{1}_{n_i} \leq \bold{1}_{n_j}\},
\end{equation}
where $\bold{1}_{n_i}$ denotes a $n_i$-dimensional column vector whose elements are all ones. The assignment matrix $\bold{X}_{ij} \in \mathbb{P}_{n_i n_j}$ between a pair of graphs $(\mathcal{G}_i, \mathcal{G}_j )$ denotes a meaningful correspondence that encodes the matching. 

When considering the synchronisation matching~\cite{pachauri2013solving,maset2017practical} of multiple graphs $(m>2)$, relying on local matches between graph pairs can readily result in erroneous correspondences that contradict each other globally~\cite{yan2013joint}. Consequently, the multi-graph matching has been addressed in terms of simultaneously solving under the constraints of cycle-consistency~\cite{yan2015multi,yan2014graduated}.

\noindent \textit{\textbf{Definition 1 Cycle-consistency.} The matching among $\mathcal{G}_1, \mathcal{G}_2, \cdots, \mathcal{G}_m$ is cycle-consistent (partial transitivity), if }
\begin{equation}
\label{cycle_consist}
    \bold{X}_{ik}\bold{X}_{kj} \leq \bold{X}_{ij}, \qquad \forall{i,j,k\in [m]}.
\end{equation}

In contrast to full matching, i.e. in Eq. (\ref{cycle_consist}) the inequalities become equalities and $n_i = n_j$, partial matching requires only that the pairwise matching combinations in the cycle form a subset of the identity matching~\cite{bernard2019synchronisation}. 
An iterative refinement strategy~\cite{nguyen2011optimization} was employed to enhance pairwise matchings by ensuring global mapping consistency. In~\cite{pachauri2013solving}, the authors proposed a method to achieve cycle-consistency based on spectral decomposition of the assignment matrix. In addition, methods such as convex programming~\cite{chen2014near}, low-rank matrix recovery~\cite{zhou2015multi}, and spectral decomposition~\cite{maset2017practical} have been proposed to address the synchronization of partial matchings.

Instead of explicitly modeling the cubic number of non-convex quadratic constraints, a more efficient approach to enforcing cycle consistency is the use of the \textit{universe of nodes}~\cite{tron2017fast}. In a universe comprising nodes of size $d \in \mathbb{N}$, the matching between graphs can be decomposed by matching each graph $\mathcal{G}_i$ to the space of universe (refer to Fig.~\ref{fig:moti}). We denote the \textit{universe matchings} as follows:
\begin{equation}
    \mathbb{U}_{n_id} := \{ \bold{U} \in \{0,1\}^{n_i\times d}: \bold{U}\bold{1}_d=\bold{1}_{n_i}, \bold{U}^{\mathsf{T}}\bold{1}_{n_i} \leq \bold{1}_d \}.
\end{equation}
 \noindent \textit{\textbf{Lemma 1 Cycle-consistency, universe matching.} \label{lemma1} The pairwise (partial) matching matrices $\{ \bold{X}_{ij}\}_{i,j=1}^{m} $ is cycle-consistent iff there exists a collection of universe matching $\{ \bold{U}_i \in \mathbb{U}_{n_id} \}_{i=1}^{m}$, such that for each $\bold{X}_{ij}$, it holds that $\bold{X}_{ij} = \bold{U}_i \bold{U}_{j}^{\mathsf{T}}$. }

The proof will be presented in the appendix. By projecting each node of a graph to the universe of nodes and identifying the universe matchings $\mathbb{U}_{n_id}$, it is ensured that the cycle-consistency constraint will be satisfied throughout the multi-graph matching process. In~\cite{wang2020graduated,wang2023unsupervised}, the authors proposed a gradual assignment procedure to achieve soft matching and clustering through the universe matching. In~\cite{bernard2019hippi}, the universe of nodes is leveraged to incorporate geometric consistency, ensuring both point scaling and convergence. \cite{nurlanov2023universe} utilized the universe matching to facilitate effective partial multi-graph matching.

\noindent \textit{\textbf{Definition 2 Multi-Matching Koopmans-Beckmann's Quadratic Assignment Problem (KB-QAP)~\cite{koopmans1957assignment}.} Multi-graph matching is formulated with KB-QAP, by summing KB-QAP objectives among all pairs of graphs:
\begin{align}
\label{eq:mgqap}
    \underset {\bold{X}_{ij}} {\max} \underset {i,j\in [m]}\sum ( & \lambda \cdot \text{tr}(\bold{X}_{ij}^{\mathsf{T}}\mathcal{A}_i\bold{X}_{ij}\mathcal{A}_j) + \text{tr}(\bold{X}_{ij}^{\mathsf{T}} \bold{M}_{ij})), \\ \nonumber
    s.t.  \quad & \bold{X}_{ij}\in \{ 0,1\}^{n_i\times n_j}, \bold{X}_{ij} \in \mathbb{P}_{n_i n_j},  \\ \nonumber
    & \bold{X}_{ik} \bold{X}_{kj} \leq \bold{X}_{ij}, \quad \forall i,j,k\in [m].
\end{align} }

\noindent In Eq. (\ref{eq:mgqap}), $\lambda$ is a scaling factor for edge-to-edge similarity, and $\bold{M}_{ij}$ is the node-to-node similarity between $\mathcal{G}_i, \mathcal{G}_j$.

\subsection{Domain Generalization}

Domain generalization addresses a challenging scenario where one or more different but related domains are provided, with the objective of training a model that can generalize effectively to an unseen test domain~\cite{wang2022generalizing,zhou2022domain,yoon2023domain}. Medical image analysis encounters significant challenges due to factors such as variability in image appearance, the complexity and high dimensionality of the data, difficulties in data acquisition, and issues related to data organization, labeling, safety, and privacy. Previous research has proposed DG algorithms across multiple levels, including data-level~\cite{yu2023san,zhang2020generalizing}, feature-level~\cite{bi2023mi,li2018domain}, model-level~\cite{gu2023cddsa,peng2019moment,zuo2021attention}, and analysis-level~\cite{liu2022single,qiao2020learning}. Yu \textit{et al.}~\cite{yu2023san} proposed a U-Net z-score nomalization network for the stroke lesion segmentation. Liu \textit{et al.}~\cite{liu2022single} employed dictionary learning for prostate and fundus segmentation by creating a shape dictionary composed of template masks. In~\cite{bi2023mi}, the authors utilized mutual information to differentiate between domain-invariant features and domain-specific ones in ultrasound image segmentation. Gu \textit{et al.}~\cite{gu2023cddsa} proposed a domain-style contrastive learning approach that disentangles an image into invariant representations and style codes for DG. 

\subsection{Test-Time Adaptation}

Test-time adaptation is an emerging paradigm that allows a pre-trained model to adapt to unlabeled data during the testing phase, before making predictions~\cite{liang2024comprehensive}.
Several TTA methods have been proposed recently, utilizing techniques such as self-supervised learning~\cite{chen2023improved,kundu2022concurrent,zhang2022memo}, batch normalization calibration~\cite{zhang2023domainadaptor,chen2024each,zou2022learning,schneider2020improving,nado2020evaluating}, and input data adaptation~\cite{shu2022test,karani2021test} to achieve better test-time performance. Typically, VPTTA~\cite{chen2024each} is a method that freezes the pre-trained model and generates low-frequency prompts for each image during inference in medical image segmentation. In~\cite{karani2021test}, the authors proposed designing two CNN-based sub-networks along with an image normalization network. During test-time training, the image normalization network was adapted for each image. \cite{zhang2023domainadaptor} proposed a dynamic mixture coefficient and a statistical transformation operation to adaptively merge the training and testing statistics of the normalization layers. Additionally, the authors design an entropy minimization loss to address the issue of domain shifts. 

\section{Methodology}
\begin{figure*}[!t]
    \centering
    \includegraphics[width=0.98\linewidth]{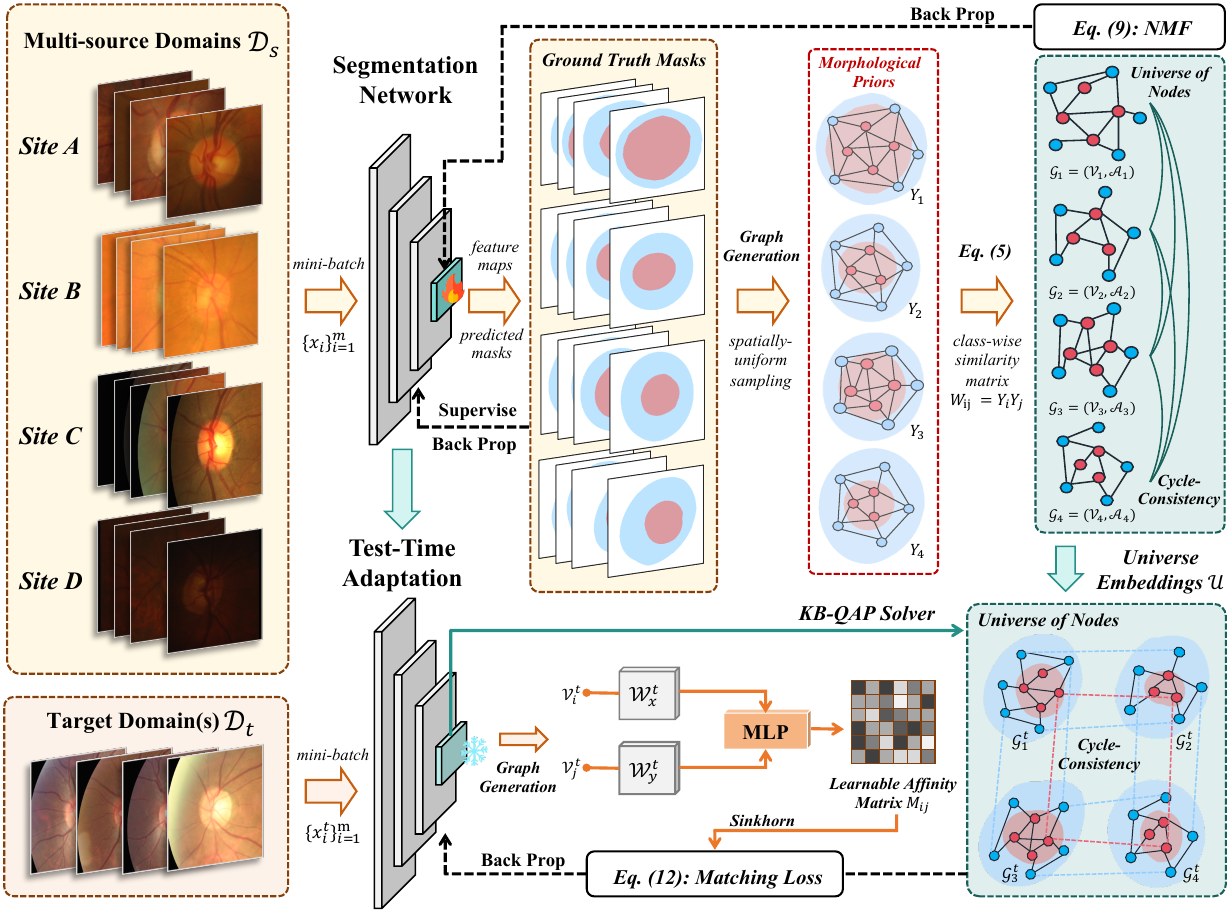}
    \caption{\textbf{Overview of our TTDG framework.} During source model training, data from different domains are jointly used to train the Segmentation Network (feature extractor and segmentation head). Feature maps and ground truth masks are utilized to construct graphs $\mathcal{G}_i$ ($i=4$ in this figure legend) and corresponding labels $Y_i$ (Sec.~\ref{sec:graph_generation}), with universe embeddings learned via back-propagation, incorporating morphological priors (Sec.~\ref{sec:training}).
    At test time, multi-graph matching is performed on all target domain images in each batch. Despite style differences, these images share common morphological patterns. Universe embeddings are frozen as prior knowledge to guide the matching, and the segmentation network is fine-tuned via back-propagation for efficient adaptation (Sec.~\ref{sec:testing}).}
     \vspace{-0.5cm}
    \label{fig:pipeline}
\end{figure*}
This section provides an overview of how multi-graph matching methods can facilitate training for TTDG tasks. Given inputs consisting of images from $S$ source domains $(S \ge 1)$, denoted as $\mathcal{D}_s = \{ D_1, D_2, \cdots, D_S \}$, the objective is to enable the model to make more accurate predictions on $T$ unseen target domains $ (T\ge 1)$, denoted as $\mathcal{D}_t = \{ D_1, D_2, \cdots, D_T \}$, during the testing phase.

Inspired by~\cite{pmlr-v235-pu24b}, we aim to effectively leverage morphological prior through graph construction. However, unlike the UDA tasks, which typically involve only two domains, our training process simultaneously handles multiple labeled domains. Pairwise matching across multiple domains can lead to the challenges discussed in Sec.~\ref{sec:intro}. To tackle this, we propose learning domain-invariant latent representations from a multi-graph matching perspective, thereby facilitating rapid adaptation to unseen domains during the TTA phase.

\subsection{Construction of Graph}
\label{sec:graph_generation}

Rather than using conventional graph-matching algorithms that depend on explicitly defined keypoints as graph nodes~\cite{gao2021deep,wang2019learning,wang2021neural}, accurately annotating keypoints in medical imaging is both more costly and less practical than in natural images.
Therefore, given a mini-batch $\{x_i\}_{i=1}^{m} \in \mathcal{D}_s$, as shown in Fig.~\ref{fig:pipeline}, it is first passed through a general feature extractor, such as ResNet~\cite{he2016deep}, to obtain visual features.
We then perform spatially-uniform sampling~\cite{li2022sigma} of pixels within the ground-truth masks at each feature level to obtain $n_i$ foreground nodes with its corresponding labels ${Y}_i\in \mathbb{Z}^{n_i}$.
After extracting these fine-grained visual features, we employ a nonlinear projection to transform the visual space into the $h$-dimensional graph space. This approach generates the feature of nodes $\{\mathcal{V}_i\in \mathbb{R}^{n_i \times h}\}_{i=1}^m$ that more effectively preserve the semantic characteristics. 

For the weighted adjacency matrix encoding structural information, we use Dropedge~\cite{rongdropedge} to reduce potential bias caused by the frequent visual correspondences, preventing the model from over-relying on specific matches. The formulation is as follows:
\begin{equation}
    \mathcal{A}_i = Dropedge\{ softmax[\mathcal{V}_i \mathcal{W}_{x}\cdot(\mathcal{V}_i \mathcal{W}_{y})^{\mathsf{T}}]\odot D^{-1}\},
\end{equation}
where $\mathcal{W}_{x}$ and $\mathcal{W}_{y}$ are two learnable linear projection, $D=\frac{\mathcal{V}_i \mathcal{V}_i^{\mathsf{T}}}{||\mathcal{V}_i||_2}$ is the cosine distance matrix. Edges with larger weights indicate nodes that are closer to each other, so we use the inverse of $D$ and apply the hadamard product $\odot$ to combine node similarity with geometric proximity. The diagonal elements of $\mathcal{A}_i$ are set to zero. Based on the above steps, we obtain a corresponding graph $ \mathcal{G}_i=(\mathcal{V}_i, \mathcal{A}_i)$ for each input $x_i$.

\subsection{Formulation of the Universe Embeddings}
\label{sec:training}
In Sec.~\ref{lemma1}, it was demonstrated that by matching each node of a graph to the \textit{universe of nodes}, we can maintain cycle-consistency while avoiding the computational burden associated with cubic non-convex constraints. While a randomly initialized matrix can learn accurate universe matchings $\mathbb{U}_{n_id}$ in a supervised setting, we introduce universe embeddings $\mathcal{U} \in \mathbb{R}^{d\times h}$, which serve as learnable latent representations to enhance adaptability in unsupervised multi-domain scenarios.

$\mathcal{U}$ is initialized as $1/d + 10^{-3}z$, where $z \sim N(0, 1)$, following the setting in~\cite{wang2020graduated}. For each $\mathcal{G}_i$, we learn a unique matching between nodes in $\mathcal{V}_i$ and the corresponding in the \textit{universe of nodes}, as follows: $U_i = Sinkhorn(\mathcal{V}_i \ \mathcal{U}^{\mathsf{T}}, \tau) \in \mathbb{U}_{n_i d}$. $Sinkhorn(X, \tau)$ refers to the Sinkhorn algorithm~\cite{sinkhorn1964relationship,cuturi2013sinkhorn}, which applies a relaxed projection with entropic regularization~\cite{gold1996softmax} to normalize the matrix $X$, resulting in a doubly-stochastic matrix, and $\tau \in (0,+\infty)$ is the regularization factor. For the obtained $U_i$, the $d\times d$ matrix $U_i^{\mathsf{T}} \mathcal{A}_i U_i$ is a row and column reordering of $\mathcal{A}_i$ based on the \textit{universe of nodes}. Accordingly, the agreement between two adjacency matrices $\mathcal{A}_i$ and $\mathcal{A}_j$, reordered according to their respective universe matching assignment matrices $U_i$ and $U_j$, can be quantified using the Frobenius inner product namely $\langle U_i^{\mathsf{T}} \mathcal{A}_i U_i, U_j^{\mathsf{T}} \mathcal{A}_j U_j\rangle$. Let $\bold{U} = [U_1^{\mathsf{T}}, \cdots, U_m^{\mathsf{T}}]^{\mathsf{T}} \in \mathbb{R}^{n\times d}$, where $n = \sum_{i=1}^{m}n_i$. The block-diagonal multi-adjacency matrix is defined as $\bold{A} = diag(\mathcal{A}_1, \cdots, \mathcal{A}_m) \in \mathbb{R}^{n\times n}$. We can get the multi-matching formulation as:
\vspace{-5pt}
\begin{align}
    f(\bold{U}) := &\ \underset{U_1, \dots, U_m}{\max} \quad \sum_{i,j \in [m]} \langle U_i^{\mathsf{T}} \mathcal{A}_i U_i, U_j^{\mathsf{T}} \mathcal{A}_j U_j \rangle \nonumber \\ 
     = &\quad \underset{\bold{U} \in \mathbb{U}}{\max} \quad \text{tr}(\bold{U}^{\mathsf{T}} \bold{A} \bold{U} \bold{U}^{\mathsf{T}} \bold{A} \bold{U}) \label{eq6}.
\end{align}

It was observed that in the original multi-graph matching optimization problem, graphs are typically constructed for each individual instance, with all the nodes within a graph belonging to the same category. However, in medical imaging, constructing separate graphs for each organ would not only greatly increase the computational complexity of graph matching but also result in misalignments between graphs from different categories, significantly compromising matching accuracy. To address this limitation, we introduce a \textit{class-wise similarity matrix} $\bold{W}$ to incorporate class-aware label information for each node. This ensures that the nodes are correctly aligned according to their respective categories during multi-graph matching.
Specifically, we define $W_{ij} = {Y}_i {Y}_j^{\mathsf{T}} $ and $\bold{W} = [W_{ij}]_{ij} \in \mathbb{Z}^{(\sum_{i=1}^{m}n_i)\times(\sum_{j=1}^{m}n_j)}$. The class-aware similarity matrix $\bold{\tilde{A}} = \bold{W}^{\mathsf{T}} \bold{A}\bold{W}$. Consequently, Eq. (\ref{eq6}) can be transformed into the following form
\vspace{-5pt}
\begin{equation}
    \underset{\bold{U} \in \mathbb{U}}{\max} \quad \text{tr}(\bold{U}^{\mathsf{T}} \bold{\tilde{A}} \bold{U} \bold{U}^{\mathsf{T}} \bold{\tilde{A}} \bold{U}).
\end{equation}


The higher-order projected power iteration (HiPPI)~\cite{bernard2019hippi} is employed to iteratively solve $\bold{U}$ to obtain a stable convergence form:
\vspace{-5pt}
\begin{equation}
    U_i^\ast = U^t_i \quad s.t. \quad || U^t_i - U^{t-1}_i|| < \theta, \quad \forall i \in [m].
\end{equation}

Empirically, we set the convergence threshold $\theta=10^{-5}$. In order to guarantee the optimal learning of universe embeddings $\mathcal{U}$, we incorporate the \textit{non-negative matrix factorisation} (NMF)~\cite{lee1999learning,bernard2019synchronisation} and define the overall optimization objective as: 
\vspace{-5pt}
\begin{equation}
    L(\mathcal{U}) = \sum_{i\in [m]} || {U}^\ast_i - \mathcal{V}_i \ \mathcal{U}^{\mathsf{T}} ||^2_F + \alpha ||\mathcal{U} ||_F^2,
\end{equation}
where $||\cdot||^2_F$ represents the Frobenius norm, and $\alpha$ is a tuning parameter. Since only matrix multiplication and element-wise division are involved, the Sinkhorn operation is fully differentiable. This allows for effective updates to $\mathcal{U}$ through back-propagation, enabling the incorporation of morphological prior knowledge.

\subsection{Testing Paradigm on the Target Domains}
\label{sec:testing}
During the Test-Time Adaptation (TTA) phase, we only have access to unlabeled data from the unseen target domains $\mathcal{D}_t$. Due to distribution shifts, the performance of models trained on $\mathcal{D}_s$ can degrade significantly. To address this issue, we employ the multi-graph matching on each mini-batch $\{x_i^t\}_{i=1}^{m} \in \mathcal{D}_t$. By utilizing universe embeddings $\mathcal{U}$ learned from $\mathcal{D}_s$ as described in Sec.~\ref{sec:training}, it is guaranteed cycle-consistency, enabling the network to focus on domain-invariant features guided by morphological priors from medical images. 
In this phase, $\mathcal{U}$ is frozen to prevent error accumulation from affecting the prior information. The feature extractor and segmentation head (segmentation network in Fig.~\ref{fig:pipeline}) are optimized through gradient back-propagation, allowing the model to adapt effectively to the new domains.

\noindent \textbf{Unsupervised Multi-Matching.} Following the same graph construction described in Sec.~\ref{sec:graph_generation}, we obtain the $\mathcal{G}_i^t = (\mathcal{V}_i^t, \mathcal{A}_i^t)$ for each $x_i^t$, where $\mathcal{V}_i^t \in \mathbb{R}^{n_i\times h}$ represents the $n_i$ class-aware nodes and $\mathcal{A}_i^t \in \mathbb{R}^{n_i\times n_i}$ is the adjacency matrix encoding the structural information. 
We first introduce a network-learned affinity matrix $M_{ij} = f_{mlp}\{\mathcal{V}_i^t \mathcal{W}_{x}^t\cdot(\mathcal{V}_j^t \mathcal{W}_{y}^t)^{\mathsf{T}}\} \in \mathbb{R}^{n_i\times n_j}$ in graph pair $\mathcal{G}_i$ and $\mathcal{G}_j$, which computes the similarity between their corresponding node features. $\mathcal{W}_{x}^t$ and $\mathcal{W}_{y}^t$ are two learnable linear projection, $f_{mlp}$ is a multi-layer perception (MLP). Subsequently, the matching between $\mathcal{G}_i$ and the universe of size $d$ is obtained through the inner-product between $\mathcal{V}_i^t$ and the pre-trained universe embeddings $\mathcal{U}$, in accordance with the same procedure $U_i = Sinkhorn(\mathcal{V}_i^t \ \mathcal{U}^{\mathsf{T}}, \tau) \in \mathbb{U}_{n_i d}$ in Sec.~\ref{sec:training}.

In~\cite{wang2020graduated,wang2023unsupervised}, the authors propose solving the multi-matching KB-QAP problem presented in Eq. (\ref{eq:mgqap}) (denoted as $J$) using a Taylor expansion. Based on the assumptions in \textit{Lemma 1} of Sec.~\ref{lemma1}, we can reformulate $J$ in Taylor series as follows:
\vspace{-5pt}
\begin{equation}
    J(U) \approx J(U^0) + \sum_i \frac{\partial J}{\partial U_i} \bigg|_{U=U^0} (U_i-U_i^0) + \cdots.
\end{equation}

$J$ can be approximated as $m$ linear optimization problems through the first-order Taylor expansion, which can be efficiently solved using the Sinkhorn. The gradient of $J$ with respect to $U_i$, denoted as
\vspace{-5pt}
\begin{equation}
\label{eq:v}
    V_i = \frac{\partial J}{\partial U_i} = \sum_{j\in [m]} (\lambda \mathcal{A}_i U_i U_j^{\mathsf{T}} \mathcal{A}_j U_j + M_{ij} U_j),
\end{equation}
$V_i$ ensures that $U_i$ gradually converges to a high-quality discrete solution.

Finally, the converged $\{U_i\}$ fine-tune the segmentation network with the matching loss $L_{mat}$: 
\vspace{-5pt}
\begin{align}
    L_{mat} = & \sum_{i,j \in [m]} \left[ -\hat{M}_{ij}^\gamma (1 - U_i U_j^{\mathsf{T}}) \log(U_i U_j^{\mathsf{T}}) \right] \nonumber \\
     - & \sum_{i,j \in [m]} \left[ (1 - \hat{M}_{ij})^\gamma U_i U_j^{\mathsf{T}} \log(1 - U_i U_j^{\mathsf{T}}) \right],
\end{align}
where $\hat{M}_{ij} = sinkhorn(M_{ij}, \tau)$, and $\gamma$ is an amplification factor. See algorithm in appendix for full details.

\section{Experiments}

\subsection{Datasets}

\textbf{The retinal fundus segmentation datasets} comprise five public datasets from different medical centers, denoted as Site A (RIM-ONE~\cite{fumero2011rim}), B (REFUGE~\cite{orlando2020refuge}), C (ORIGA~\cite{zhang2010origa}), D (REFUGE-Test~\cite{orlando2020refuge}), and E (Drishti-GS~\cite{sivaswamy2014drishti}), with 159, 400, 650, 800, and 101 images respectively, all consistently annotated for optic disc (OD) and optic cup (OC) segmentation. We adopted the preprocessing method outlined in~\cite{liu2022single,chen2024each}, where each image's region of interest is cropped to $800\times800$ pixels and normalized using min-max normalization.

\noindent \textbf{The polyp segmentation datasets} consist of four public datasets collected from different medical centers, denoted as Site A (BKAI-IGH-NEOPolyp~\cite{ngoc2021neounet}), B (CVC-ClinicDB/CVC-612~\cite{bernal2015wm}), C (ETIS~\cite{silva2014toward}), and D (Kvasir~\cite{jha2020kvasir}), containing 1000, 612, 196, and 1000 images, respectively. We followed the preprocessing steps in~\cite{chen2024each}, resizing the images to $800 \times 800$ pixels and normalizing them using ImageNet-derived statistics.

\subsection{Experimental Setup}
\label{exp_setup}
\textbf{Implementation Details.} To ensure a fair comparison across all methods, we followed the protocol in~\cite{chen2024each} and split each dataset into an $8:2$ ratio for training and testing. We employed the ResNet-50~\cite{he2016deep} pre-trained on ImageNet as the feature extractor. The training was optimized using the SGD optimizer with a momentum of 0.9 and a learning rate of 0.001. For source model training, we set the mini-batch size to 8, allowing simultaneous matching of 8 graphs per batch. The universe embedding $\mathcal{U}$ was treated as an additional trainable tensor, integrated into the network weights without influencing subsequent experiments with other methods.
During the TTA phase, all methods were trained on the target data without access to their labels. The mini-batch size was set to 4, and the previously learned $\mathcal{U}$ from source model training was frozen. All experiments were implemented using PyTorch and conducted on 4 NVIDIA 3090 GPUs.

\noindent \textbf{Evaluation Metrics.} The Dice score (DSC, \%) was used to quantify the accuracy of the predicted masks, serving as the primary metric for evaluating segmentation performance. Additionally, the enhanced alignment metric $E_\phi^{max}$~\cite{fan2018enhanced} was adopted to measure both pixel-level and global-level similarity. To further assess the consistency between predictions and ground truths, we also employed the structural similarity metric $S_{\alpha}$~\cite{fan2017structure}.

\subsection{Experimental Results}
We selected U-Net~\cite{ronneberger2015u} as the benchmark model for the segmentation task, training it on the source domains and testing on the target domain without adaptation (\textit{No Adapt}). In addition, we compared eight state-of-the-art (SOTA) methods, including two entropy-based approaches (TENT~\cite{wangtent} and SAR~\cite{niu2023towards}), a shape template-based method (TASD~\cite{liu2022single}), a dynamically adjusted learning rate method (DLTTA~\cite{yang2022dltta}), batch normalization-based methods (DomainAdaptor~\cite{zhang2023domainadaptor}, VPTTA~\cite{chen2024each}), and noise estimation-based approaches (DeY-Net~\cite{wen2024denoising} and NC-TTT~\cite{osowiechi2024nc}).

\begin{table*}[h!]
\centering
  \caption{Multi sources domain generaliation in retinal fundus segmentation. The average performance (mean $\pm$ standard deviation) of three trials for our method and nine SOTA methods. ``Site A'' means training on Sites B-E and testing on Site A, and similarly for the others. The best results are highlighted in \textcolor{red}{red}.}
  \vspace{-5pt}
  \begin{adjustbox}{width=0.99\linewidth}
    \begin{tabular}{c|ccccccccccccccc|ccc}
\hlineB{3}
\multirow{2}{*}{Methods} & \multicolumn{3}{c}{Site A} & \multicolumn{3}{c}{Site B} & \multicolumn{3}{c}{Site C} & \multicolumn{3}{c}{Site D} & \multicolumn{3}{c|}{Site E} & \multicolumn{3}{c}{Average} \\ \cline{2-19} 
                         & \textit{DSC}       & $E_\phi^{max}$       & $S_{\alpha}$      & \textit{DSC}       & $E_\phi^{max}$       & $S_{\alpha}$     & \textit{DSC}       & $E_\phi^{max}$       & $S_{\alpha}$    & \textit{DSC}       & $E_\phi^{max}$       & $S_{\alpha}$  & \textit{DSC}       & $E_\phi^{max}$       & $S_{\alpha}$      & \textit{DSC} $\uparrow$      & $E_\phi^{max}$ $\uparrow$       & $S_{\alpha}$ $\uparrow$    \\ \hline \hline
\textit{No Adapt} (U-Net~\cite{ronneberger2015u}) & 65.60\small{$\pm$5.78} & 91.75\small{$\pm$0.13} & 79.11\small{$\pm$0.07} & 74.65\small{$\pm$4.88} & 88.78\small{$\pm$0.08} & 83.31\small{$\pm$0.02} & 63.15\small{$\pm$7.30} & 81.03\small{$\pm$0.19} & 79.77\small{$\pm$0.03} & 68.11\small{$\pm$5.49} & 89.46\small{$\pm$0.20} & 82.12\small{$\pm$0.07} & 75.34\small{$\pm$1.01} & 90.68\small{$\pm$0.16} & 87.53\small{$\pm$0.05} & 69.37 & 88.34 & 82.36 \\ \hline
TENT (ICLR'21)~\cite{wangtent} & 75.22\small{$\pm$3.99} & \textcolor{red}{94.10\small{$\pm$0.10}} & 83.97\small{$\pm$0.05} & 81.06\small{$\pm$0.93} & 93.45\small{$\pm$0.11} & \textcolor{red}{92.36\small{$\pm$0.03}} &  75.23\small{$\pm$2.33} & 88.10\small{$\pm$0.14} & 84.62\small{$\pm$0.03} & 78.12\small{$\pm$2.59} & 91.33\small{$\pm$0.31} & 86.41\small{$\pm$0.12}  & 83.17\small{$\pm$0.78} & 92.74\small{$\pm$0.10} & 93.64\small{$\pm$0.03} & 78.56 & 91.94 & 88.20     \\
TASD (AAAI'22)~\cite{liu2022single} & 79.70\small{$\pm$2.52} & 88.73\small{$\pm$0.45} & 83.67\small{$\pm$0.17} & 82.96\small{$\pm$2.14} & 94.01\small{$\pm$0.20} & 90.99\small{$\pm$0.11} & 86.12\small{$\pm$1.11} & 93.59\small{$\pm$0.13} & 87.08\small{$\pm$0.07} & 78.02\small{$\pm$6.30} & 87.20\small{$\pm$0.16} & 87.94\small{$\pm$0.04} & 83.71\small{$\pm$3.77} & 90.88\small{$\pm$0.25} & 90.37\small{$\pm$0.12} & 82.10 & 90.88 & 88.01 \\

DLTTA (TMI'22)~\cite{yang2022dltta} & 71.88\small{$\pm$5.59} & 86.29\small{$\pm$0.21} & 80.70\small{$\pm$0.13} & 80.12\small{$\pm$2.26} & 90.53\small{$\pm$0.17} & 88.63\small{$\pm$0.06} & 81.46\small{$\pm$3.35} & 93.08\small{$\pm$0.08} & 87.48\small{$\pm$0.02} & 75.90\small{$\pm$7.27} & 84.27\small{$\pm$0.12} & 84.79\small{$\pm$0.06} & 82.75\small{$\pm$6.55} & 90.10\small{$\pm$0.31} & 89.33\small{$\pm$0.14} & 78.42 & 88.85 & 86.18 \\

SAR (ICLR'23)~\cite{niu2023towards} & 75.29\small{$\pm$2.69} & 88.73\small{$\pm$0.34} & 80.15\small{$\pm$0.26} & 86.42\small{$\pm$1.57} & 96.88\small{$\pm$0.27} & 91.28\small{$\pm$0.09} & 80.69\small{$\pm$4.30} & 92.10\small{$\pm$0.15} & 86.31\small{$\pm$0.27} & 80.27\small{$\pm$5.47} & 89.98\small{$\pm$0.31} & 87.20\small{$\pm$0.15} & 89.61\small{$\pm$0.28} & 95.90\small{$\pm$0.08} & 92.06\small{$\pm$0.01} & 82.46 & 92.71 & 87.40\\

DomainAdaptor (CVPR'23)~\cite{zhang2023domainadaptor} & 77.14\small{$\pm$1.21} & 90.80\small{$\pm$0.19} & 81.31\small{$\pm$0.02} & 83.95\small{$\pm$0.80} & 93.17\small{$\pm$0.09} & 90.44\small{$\pm$0.08} & 82.79\small{$\pm$1.09} & 95.36\small{$\pm$0.10} & 88.57\small{$\pm$0.13} & 86.50\small{$\pm$3.29} & 95.26\small{$\pm$0.33} & 91.07\small{$\pm$0.20} & 86.24\small{$\pm$1.44} & 94.90\small{$\pm$0.29} & 93.77\small{$\pm$0.18} & 83.32 & 93.90 & 89.03 \\

DeY-Net (WACV'24)~\cite{wen2024denoising} 
 & 79.67\small{$\pm$1.35} &89.14\small{$\pm$0.23} & 83.68\small{$\pm$0.04} & \textcolor{red}{90.07\small{$\pm$3.12}} & \textcolor{red}{97.06\small{$\pm$0.33}} & 92.31\small{$\pm$0.07} & 85.94\small{$\pm$1.85} & 93.54\small{$\pm$0.22} & 87.15\small{$\pm$0.10} & 83.47\small{$\pm$1.40} & 95.29\small{$\pm$0.17} & 88.70\small{$\pm$0.04} & 88.76\small{$\pm$2.28} & 95.43\small{$\pm$0.21} & 90.45\small{$\pm$0.08} & 85.58 & 94.09 & 88.46 \\

VPTTA (CVPR'24)~\cite{chen2024each} & 77.85\small{$\pm$1.40} & 94.09\small{$\pm$0.01} & 83.97\small{$\pm$0.05} & 84.57\small{$\pm$0.99} & 95.38\small{$\pm$0.03} & 89.53\small{$\pm$0.09} & 83.61\small{$\pm$1.67} & \textcolor{red}{96.39\small{$\pm$0.04}} & 89.64\small{$\pm$0.05} & 82.75\small{$\pm$0.50} & 96.53\small{$\pm$0.04} & 89.83\small{$\pm$0.01} & 89.74\small{$\pm$1.51} & \textcolor{red}{98.22\small{$\pm$0.24}} & \textcolor{red}{94.77\small{$\pm$0.07}} & 83.70 & \textcolor{red}{96.12} & 89.55 \\ 

NC-TTT (CVPR'24)~\cite{osowiechi2024nc} 
 & 80.83\small{$\pm$1.77} & 89.61\small{$\pm$0.23} & 84.41\small{$\pm$0.08} &87.07\small{$\pm$1.31} & 95.63\small{$\pm$0.06} & 90.09\small{$\pm$0.02} & 86.93\small{$\pm$2.27} & 94.69\small{$\pm$0.11} & 88.39\small{$\pm$0.03} & 82.10\small{$\pm$4.28} & 91.49\small{$\pm$0.22} & 86.33\small{$\pm$0.07} & 90.08\small{$\pm$0.99} & 95.93\small{$\pm$0.10} & 91.53\small{$\pm$0.09} & 85.40 &  93.47 & 88.15 \\  

\hline

Ours   &  \textcolor{red}{82.53\small{$\pm$1.52}}  &   91.85\small{$\pm$0.22}      &   \textcolor{red}{85.50\small{$\pm$0.20}}     &    88.98\small{$\pm$0.89}       &     96.69\small{$\pm$0.04}    &   91.63\small{$\pm$0.12}     &  \textcolor{red}{88.73\small{$\pm$0.70}}         &   95.80\small{$\pm$0.06}      &   \textcolor{red}{89.71\small{$\pm$0.02}}     &    \textcolor{red}{90.25\small{$\pm$1.33}}       &    \textcolor{red}{97.97\small{$\pm$0.19}}     &   \textcolor{red}{92.89\small{$\pm$0.13}}      &     \textcolor{red}{91.83 \small{$\pm$0.71}}     &  97.14\small{$\pm$0.20}      &   92.82\small{$\pm$0.09}  &  \textcolor{red}{88.46} & 95.84 &  \textcolor{red}{90.51} \\ 

\hlineB{3}
\end{tabular}
  \end{adjustbox}
  \label{tab:tabel_fundus}
\end{table*}

\begin{table*}[h!]
\centering
  \caption{Multi sources domain generaliation in the polyp segmentation. The average performance (mean $\pm$ standard deviation) of three trials for our method and nine SOTA methods. ``Site A'' means training on Sites B-D and testing on Site A, and similarly for the others. The best results are highlighted in \textcolor{red}{red}.}
  \vspace{-9pt}
  \begin{adjustbox}{width=0.99\linewidth}
    \begin{tabular}{c|cccccccccccc|ccc}
\hlineB{3}
\multirow{2}{*}{Methods} & \multicolumn{3}{c}{Site A} & \multicolumn{3}{c}{Site B} & \multicolumn{3}{c}{Site C} & \multicolumn{3}{c|}{Site D} & \multicolumn{3}{c}{Average} \\ \cline{2-16} 
                         & \textit{DSC}       & $E_\phi^{max}$       & $S_{\alpha}$      & \textit{DSC}       & $E_\phi^{max}$       & $S_{\alpha}$     & \textit{DSC}       & $E_\phi^{max}$       & $S_{\alpha}$    & \textit{DSC}       & $E_\phi^{max}$       & $S_{\alpha}$ & \textit{DSC} $\uparrow$      & $E_\phi^{max}$ $\uparrow$      & $S_{\alpha}$ $\uparrow$    \\ \hline \hline

\textit{No Adapt} (U-Net~\cite{ronneberger2015u}) & 82.67\small{$\pm$2.24} & 92.11\small{$\pm$0.07} & 89.59\small{$\pm$0.10} &  81.20\small{$\pm$4.38} & 92.46\small{$\pm$0.20} & 86.08\small{$\pm$0.06} & 78.33\small{$\pm$1.62} & 94.00\small{$\pm$0.08} & 85.35\small{$\pm$0.01} &  75.51\small{$\pm$2.74} & 85.90\small{$\pm$0.17} & 81.71\small{$\pm$0.21} &  79.43 & 91.12 & 85.68    \\ \hline
TENT (ICLR'21)~\cite{wangtent} & 78.69\small{$\pm$1.12} & 88.28\small{$\pm$0.14} & 85.44\small{$\pm$0.08} & 77.71\small{$\pm$3.09} & 89.73\small{$\pm$0.09} & 81.90\small{$\pm$0.12} & 80.19\small{$\pm$3.35} & 96.03\small{$\pm$0.04} & 87.80\small{$\pm$0.17} & 72.63\small{$\pm$5.11} & 82.47\small{$\pm$0.30} & 78.92\small{$\pm$0.14} & 77.31 &  89.13 &   83.52          \\
TASD (AAAI'22)~\cite{liu2022single} & 84.60\small{$\pm$4.14} & 92.55\small{$\pm$0.21} & 90.76\small{$\pm$0.10} & 85.28\small{$\pm$2.20} & 93.10\small{$\pm$0.13} & 86.83\small{$\pm$0.19} & 82.43\small{$\pm$5.39} & 96.31\small{$\pm$0.08} & 89.61\small{$\pm$0.02} & 80.04\small{$\pm$4.41} & 88.26\small{$\pm$0.11} & 84.09\small{$\pm$0.03} & 83.09 & 92.56 & 87.82\\

DLTTA (TMI'22)~\cite{yang2022dltta} & 80.68\small{$\pm$2.37} & 90.01\small{$\pm$0.21} & 86.89\small{$\pm$0.09} & 80.86\small{$\pm$1.77} & 91.98\small{$\pm$0.10} & 85.83\small{$\pm$0.04} & 75.57\small{$\pm$3.12} & 93.90\small{$\pm$0.08} & 84.15\small{$\pm$0.10} & 74.25\small{$\pm$3.40} & 83.49\small{$\pm$0.12} & 79.44\small{$\pm$0.10} & 77.84 & 89.85 & 84.08 \\

SAR (ICLR'23)~\cite{niu2023towards} & 79.37\small{$\pm$0.77} & 88.40\small{$\pm$0.12} & 86.11\small{$\pm$0.03} & 78.40\small{$\pm$2.07} & 90.23\small{$\pm$0.11} & 83.39\small{$\pm$0.02} & 80.94\small{$\pm$0.90} & 95.83\small{$\pm$0.12} & 86.84\small{$\pm$0.07} & 78.23\small{$\pm$2.91} & 86.37\small{$\pm$0.09} & 81.66\small{$\pm$0.14} & 79.24 & 90.21 & 84.50\\

DomainAdaptor (CVPR'23)~\cite{zhang2023domainadaptor} & 88.58\small{$\pm$0.96} & 95.03\small{$\pm$0.04} & 90.95\small{$\pm$0.01} & 81.12\small{$\pm$1.07} & 92.31\small{$\pm$0.08} & 86.60\small{$\pm$0.02} & 81.77\small{$\pm$2.17} & 95.18\small{$\pm$0.03} &  87.19\small{$\pm$0.01} & 79.91\small{$\pm$1.33} & 87.23\small{$\pm$0.04} & 83.64\small{$\pm$0.05} & 82.85 & 92.44 & 87.10 \\

DeY-Net (WACV'24)~\cite{wen2024denoising} & 83.61\small{$\pm$2.31} & 90.79\small{$\pm$0.11} & 89.25\small{$\pm$0.09} & 80.90\small{$\pm$3.17} & 91.49\small{$\pm$0.10} & 85.36\small{$\pm$0.03} & 80.46\small{$\pm$1.43} & 95.83\small{$\pm$0.08} & 87.14\small{$\pm$0.03} & 73.85\small{$\pm$3.04} & 83.00\small{$\pm$0.13} & 78.96\small{$\pm$0.08} & 79.71 & 90.28 & 85.18\\

VPTTA (CVPR'24)~\cite{chen2024each} & 82.34\small{$\pm$0.69} & 91.03\small{$\pm$0.08} & 88.35\small{$\pm$0.01} & 84.32\small{$\pm$0.44} & 91.53\small{$\pm$0.07} & 87.46\small{$\pm$0.02} & 84.43\small{$\pm$0.52} & 95.75\small{$\pm$0.16} & 88.91\small{$\pm$0.03} & 82.40\small{$\pm$0.29} & 89.60\small{$\pm$0.04} & 85.18\small{$\pm$0.01} & 83.37 & 91.97 & 87.47\\ 

NC-TTT (CVPR'24)~\cite{osowiechi2024nc} & 87.96\small{$\pm$1.22} & 93.71\small{$\pm$0.13} & 90.66\small{$\pm$0.07} & 83.70\small{$\pm$2.08} & 93.78\small{$\pm$0.09} & 87.51\small{$\pm$0.07} & \textcolor{red}{89.29\small{$\pm$0.90}} & 96.66\small{$\pm$0.22} & \textcolor{red}{91.26\small{$\pm$0.07}} & 78.23\small{$\pm$0.79} & 86.37\small{$\pm$0.09} & 81.66\small{$\pm$0.06} & 84.80 & 92.63 & 87.77 \\

\hline
Ours  &   \textcolor{red}{90.97\small{$\pm$0.66}}  &  \textcolor{red}{96.57\small{$\pm$0.06}}   & \textcolor{red}{92.92\small{$\pm$0.08}}  & \textcolor{red}{87.49\small{$\pm$0.82}} & \textcolor{red}{95.21\small{$\pm$0.04}}  & \textcolor{red}{89.93\small{$\pm$0.09}} & 86.87\small{$\pm$0.58} 
 & \textcolor{red}{97.53\small{$\pm$0.05}} &  90.16\small{$\pm$0.12} & \textcolor{red}{83.97\small{$\pm$1.18}} & \textcolor{red}{90.67\small{$\pm$0.13}} & \textcolor{red}{86.44\small{$\pm$0.08}} &  \textcolor{red}{87.32} &   \textcolor{red}{94.67}     &    \textcolor{red}{89.86}    \\ 
\hlineB{3}
\end{tabular}
  \end{adjustbox}
  \label{tab:tabel_polyp}
\end{table*}

\noindent \textbf{Multi-Source Generalization.} In these experiments, we adopted a leave-one-out training strategy~\cite{chen2023improved} (i.e., with $S = |\mathcal{D}_s \cup \mathcal{D}_t|-1$ and $T = 1$). Although some methods, such as TASD~\cite{liu2022single}, DeY-Net~\cite{wen2024denoising} and VPTTA~\cite{chen2024each}, were originally designed for single-source domain training, we simulated single-domain conditions by mixing multi-source domain data, making the experimental setup still feasible for these approaches. Furthermore, we have also included experiments specifically focused on single-source domains later in the study. The segmentation results for the fundus and polyp datasets are shown in Tables~\ref{tab:tabel_fundus} and \ref{tab:tabel_polyp}, respectively. For OD/OC segmentation, all TTA methods outperformed the \textit{No Adapt} baseline, highlighting their effectiveness in addressing domain shifts. Comparatively, our method consistently achieved better average performance than all other approaches. Specifically, for the key segmentation metric DSC, our method exceeded the second-best (DeY-Net~\cite{wen2024denoising}) by 2.88\% and outperformed the \textit{No Adapt} by 19.09\%. A similar trend was observed in the polyp segmentation results, where we outperformed the second-best method (NC-TTT~\cite{osowiechi2024nc}) and \textit{No Adapt} by 2.52\% and 7.89\%, respectively. Additionally, our approach exhibited greater stability across multiple trials compared to the other methods.

\begin{table*}[h!]
\centering
  \caption{Single source domain generalization in the polyp segmentation. The average performance of three trials for our method and nine SOTA methods. A $\rightarrow$ B represents models trained on Site A and tested on Site B, and similar for others. Best results are colored as \textcolor{red}{red}.}
  \vspace{-10pt}
  \begin{adjustbox}{width=0.99\linewidth}
    \begin{tabular}{c|cccccccccccc|c}
\hlineB{3}
\multirow{2}{*}{Methods}      & A $\rightarrow$ B & A $\rightarrow$ C & A $\rightarrow$ D & B $\rightarrow$ A & B $\rightarrow$ C & B $\rightarrow$ D & C $\rightarrow$ A & C $\rightarrow$ B & C $\rightarrow$ D & D $\rightarrow$ A & D $\rightarrow$ B & D $\rightarrow$ C & Avg. \\ \cline{2-14} 
& \multicolumn{13}{c}{\textbf{Dice Score Metric $\uparrow$ (DSC, mean$\pm$std )}} \\ \hline \hline
\textit{No Adapt} (U-Net~\cite{ronneberger2015u})  & 76.49\small{$\pm$1.48} & 67.70\small{$\pm$3.27} &  70.94\small{$\pm$7.56} & 76.21\small{$\pm$2.90} & 62.80\small{$\pm$8.03} & 70.03\small{$\pm$4.53} & 72.87\small{$\pm$3.62} & 69.13\small{$\pm$5.15} & 70.44\small{$\pm$3.27} & 79.78\small{$\pm$1.08} & 72.47\small{$\pm$2.80} & 74.91\small{$\pm$3.18} & 71.98  \\ \hline
TENT (ICLR'21)~\cite{wangtent} & 68.13\small{$\pm$0.18} & 65.99\small{$\pm$0.11} & 66.11\small{$\pm$0.14} & 77.56\small{$\pm$0.21}  & 59.87\small{$\pm$0.20} & 68.89\small{$\pm$0.14} & 73.21\small{$\pm$0.10} & 65.04\small{$\pm$0.09} & 72.35\small{$\pm$0.22} & 77.88\small{$\pm$0.14} &  70.01\small{$\pm$0.11} &  73.60\small{$\pm$0.20}  &69.89    \\
TASD (AAAI'22)~\cite{liu2022single} & 73.02\small{$\pm$2.24} & 79.67\small{$\pm$6.37} & 78.23\small{$\pm$4.44} & 66.82\small{$\pm$9.17} & 72.09\small{$\pm$4.40} & 77.95\small{$\pm$1.69} & 74.20\small{$\pm$3.30} & 69.74\small{$\pm$7.27} & 76.13\small{$\pm$2.82} & 83.77\small{$\pm$0.51} & 76.09\small{$\pm$3.26} & 80.10\small{$\pm$1.47} & 75.65\\

DLTTA (TMI'22)~\cite{yang2022dltta} & 72.23\small{$\pm$0.04} & 70.78\small{$\pm$0.06} & 74.56\small{$\pm$0.04} & 64.87\small{$\pm$0.01} & 60.05\small{$\pm$0.09} & 78.47\small{$\pm$0.03} & 71.22\small{$\pm$0.03} & 65.78\small{$\pm$0.10} & 71.00\small{$\pm$0.13} & 79.89\small{$\pm$0.08} & 79.10\small{$\pm$0.10} & 74.52\small{$\pm$0.01} & 71.87 \\

SAR (ICLR'23)~\cite{niu2023towards} & 70.34\small{$\pm$0.13} & 77.90\small{$\pm$0.20} & 77.71\small{$\pm$0.09} & 68.19\small{$\pm$0.12} & 66.56\small{$\pm$0.08} & 73.92\small{$\pm$0.13} & 76.12\small{$\pm$0.11} & 71.29\small{$\pm$0.19} & 75.08\small{$\pm$0.08} & 82.07\small{$\pm$0.04} & 80.44\small{$\pm$0.21} & 83.10\small{$\pm$0.15} & 75.22 \\

DomainAdaptor (CVPR'23)~\cite{zhang2023domainadaptor} & 78.39\small{$\pm$0.61} & 77.09\small{$\pm$0.35} & 79.45\small{$\pm$0.21} & 74.98\small{$\pm$0.70} & 70.07\small{$\pm$0.66} & 80.21\small{$\pm$0.46} & 76.02\small{$\pm$0.33} & 73.40\small{$\pm$0.50} & \textcolor{red}{81.33\small{$\pm$0.54}} & 86.10\small{$\pm$0.43} & 80.56\small{$\pm$0.62} & 77.86\small{$\pm$0.21} & 77.96 \\

DeY-Net (WACV'24)~\cite{wen2024denoising} & 74.45\small{$\pm$1.31} & 76.07\small{$\pm$2.52} & 80.11\small{$\pm$0.72} & 70.90\small{$\pm$1.10} & 69.23\small{$\pm$2.18} & 76.31\small{$\pm$1.32} & 78.33\small{$\pm$4.43} & 74.60\small{$\pm$1.66} &73.52\small{$\pm$2.25} & 80.70\small{$\pm$0.27} & 78.23\small{$\pm$1.17} & 81.48\small{$\pm$0.93} & 76.16 \\

VPTTA (CVPR'24)~\cite{chen2024each}       &78.18\small{$\pm$0.01}  &  75.14\small{$\pm$0.03}  &  82.73\small{$\pm$0.02}  & 73.65\small{$\pm$0.06} &   64.78\small{$\pm$0.01}     &    82.86\small{$\pm$0.03}    &  77.90\small{$\pm$0.01}      &   70.64\small{$\pm$0.05}     &   77.22\small{$\pm$0.10}     &   87.19\small{$\pm$0.03}     &    \textcolor{red}{84.97\small{$\pm$0.08}}    &   82.12\small{$\pm$0.02}     &  78.12  \\ 

NC-TTT (CVPR'24)~\cite{osowiechi2024nc} & 79.21\small{$\pm$0.24} & 81.76\small{$\pm$0.57} & \textcolor{red}{83.22\small{$\pm$0.69}} & 77.35\small{$\pm$0.33} & 79.40\small{$\pm$0.67} & 80.94\small{$\pm$0.54} & \textcolor{red}{80.17\small{$\pm$0.70}} & 72.36\small{$\pm$0.62} & 78.55\small{$\pm$0.43} & 84.09\small{$\pm$0.38} & 82.38\small{$\pm$0.51} & 79.77\small{$\pm$0.74} & 79.93 \\


\hline

Ours  & \textcolor{red}{80.47\small{$\pm$0.28}} &    \textcolor{red}{83.93\small{$\pm$0.19}} & 83.18\small{$\pm$0.33} & \textcolor{red}{78.33\small{$\pm$0.26}}  &  \textcolor{red}{81.25\small{$\pm$0.23}}  & \textcolor{red}{83.58\small{$\pm$0.34}}  &  78.86\small{$\pm$0.26} &    \textcolor{red}{76.49\small{$\pm$0.21}} &    80.20\small{$\pm$0.24}    &    \textcolor{red}{89.17\small{$\pm$0.22}}    &    81.68\small{$\pm$0.22}    &   \textcolor{red}{83.86\small{$\pm$0.28}}     &  \textcolor{red}{81.75}    \\ \hlineB{3}
\end{tabular}
  \end{adjustbox}
  \label{tab:tabel_single}
\end{table*}
    
\noindent \textbf{Single-Source Generalization.} In these experiments, the models are trained on one domain and tested on the remaining datasets (i.e., with $S=1$ and $T=|\mathcal{D}_s \cup \mathcal{D}_t|-1$). Compared to multi-source generalization, single-source training is considered more challenging due to the limited domain information available during the training phase. By comparing Tables~\ref{tab:tabel_polyp} and \ref{tab:tabel_single}, we can observe that all models show lower segmentation performance (in terms of DSC) in the single-source setting compared to the multi-source results. Some methods even perform worse than the \textit{No Adapt} baseline in single-source scenarios.
However, our method still achieves the best average performance across 12 domain transfer experiments, outperforming the second-best approach (NC-TTT~\cite{osowiechi2024nc}) by 1.83\% and exceeding the \textit{No Adapt} by 9.77\% in terms of DSC. 

Fig.~\ref{fig:visual} presents the visualization results for ``Site A'' in Table~\ref{tab:tabel_fundus}. The \textit{No Adapt} baseline exhibits significant misalignment, with notably distorted shapes and inaccurate boundaries. VPTTA offers some improvement but still suffers from distortions and incomplete segmentation, particularly in the optic cup area. NC-TTT shows instances of overlapping segmentation. In contrast, our method delivers the most precise segmentation, closely aligning with the ground truth. The Grad-CAM~\cite{selvaraju2017grad} visualizations reinforce this, as the attention of network is focused specifically on the relevant regions. These results demonstrate our method's enhanced generalization to unseen domains by effectively incorporating morphological priors, resulting in higher segmentation accuracy. For further visualization and additional experimental results, please refer to the appendix.

\begin{figure}[!t]
    \centering
\includegraphics[width=0.999\linewidth]{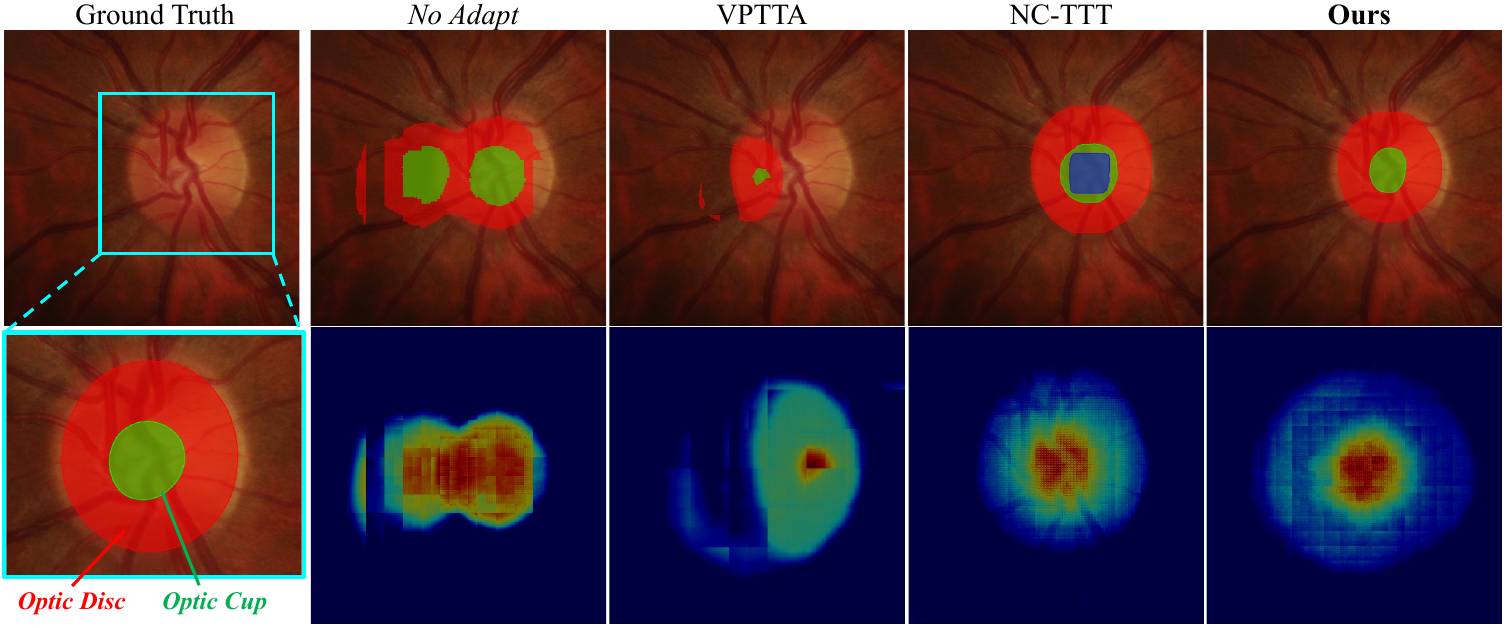}
    \caption{Visualization comparison of segmentation results and Grad-CAM outputs from the final layer of the backbone network for the \textit{No Adapt} baseline, VPTTA~\cite{chen2024each}, NC-TTT~\cite{osowiechi2024nc}, and our proposed method on retinal fundus images. Additional visual comparisons are provided in the supplementary material.}
     \vspace{-0.5cm}
    \label{fig:visual}
\end{figure}
\section{Analysis}

\begin{table}[htb!]
\setlength{\tabcolsep}{6pt}
\centering
  \caption{Ablation study in the retinal fundus segmentation. Details of the experiment can be found in Sec~\ref{sec:abl_ue}.}
  \vspace{-0.3cm}
  \begin{adjustbox}{width=0.999\linewidth}
    \begin{tabular}{c|ccccc|c}
\hlineB{3}
\multirow{2}{*}{Methods}     & Site A & Site B & Site C & Site D & Site E & Average \\ \cline{2-7}
& \multicolumn{6}{c}{\textbf{Dice Score Metric $\uparrow$ (DSC, mean$\pm$std )}} \\ \hline \hline

\textit{No Adapt} & 65.60\small{$\pm$5.78} & 74.65\small{$\pm$4.88} & 63.15\small{$\pm$7.30} & 68.11\small{$\pm$5.49} & 75.34\small{$\pm$1.01} & 69.37\\ \hline

\textit{w/o $\mathcal{U}$}      &  77.09\small{$\pm$1.90}      & 83.24\small{$\pm$2.01}       &  80.50\small{$\pm$1.44}      &  78.32\small{$\pm$1.55}      &    83.18\small{$\pm$2.18}    &  80.47       \\
\textit{w/o priors} &   79.51\small{$\pm$0.80}     & 79.36\small{$\pm$1.13}       &  77.40\small{$\pm$1.89}      &    77.29\small{$\pm$1.77}    &  80.00\small{$\pm$1.93}      &    78.71     \\
\hline
Ours &  82.53\small{$\pm$1.52} &  88.98\small{$\pm$0.89}      &  88.73\small{$\pm$0.70}      &  90.25\small{$\pm$1.33}      &   91.83\small{$\pm$0.71}     &   \textbf{88.46}    \\
\hlineB{3}
\end{tabular}
  \end{adjustbox}
  \label{tab:tabel_abl1}
\end{table}


\subsection{Effectiveness of the Universe Embeddings}
\label{sec:abl_ue}
The universe embeddings $\mathcal{U}$ play a crucial role in incorporating morphological priors from medical images, resulting in an assignment matrix that projects each node to the \textit{universe of nodes}. To validate the effectiveness during the TTA, we conducted the following experiments: (1) Without universe embeddings (denoted as \textit{w/o $\mathcal{U}$}), the universe matching assignment matrix $\bold{U}$ is initialized following the setting in \cite{wang2020graduated} (i.e. $\bold{U}=1/d+10^{-3}z$, where $z\sim N(0,1)$), leading to random matching between the nodes and the universe; (2) Without morphological priors but with $\mathcal{U}$ (denoted as \textit{w/o priors}). As shown in Table~\ref{tab:tabel_abl1}, the performance in (1) is slightly better than in (2) by 1.76\% in terms of DSC on average. However, when using the pre-trained $\mathcal{U}$ derived from the source model, the segmentation results show a significant improvement. This highlights the effectiveness of the pre-trained morphological priors embedded in $\mathcal{U}$ for medical imaging tasks.

\begin{figure}[!t]
    \centering
\includegraphics[width=0.999\linewidth]{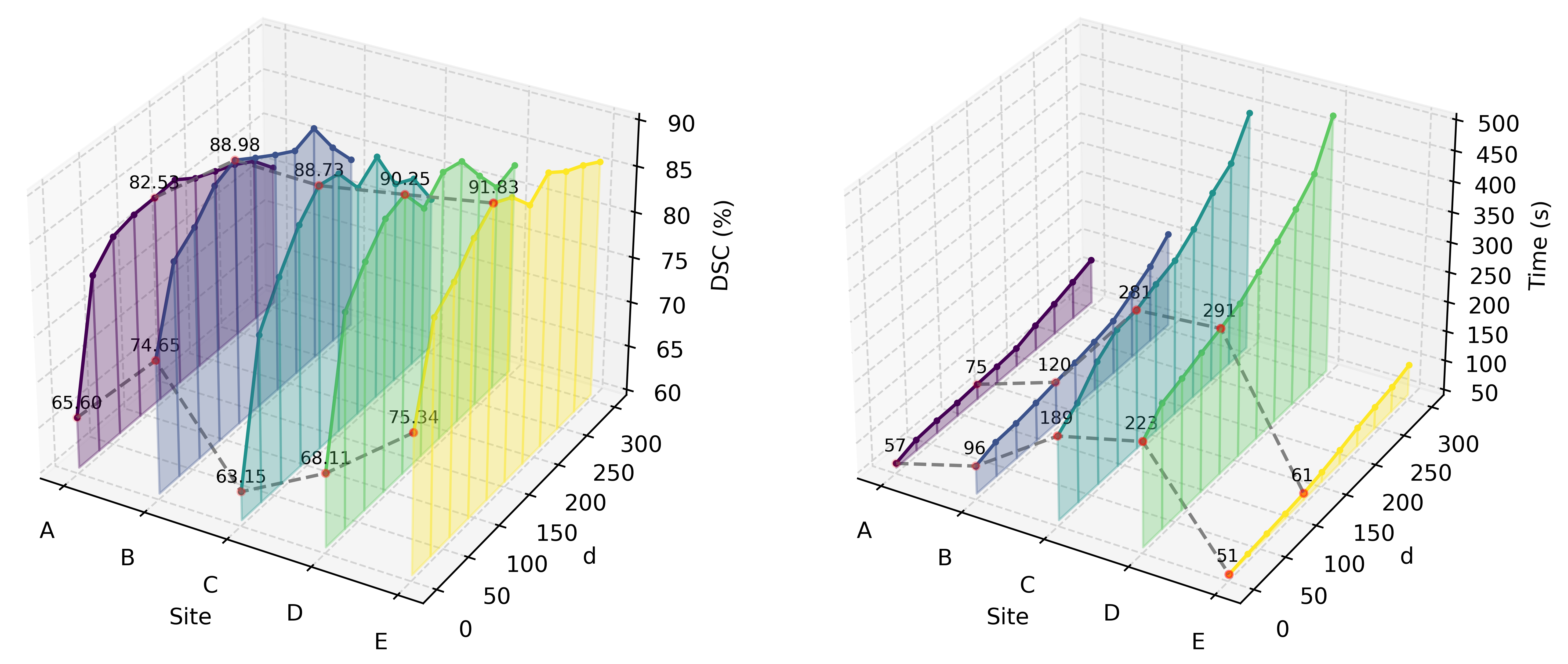}
    \caption{The performance of DSC (\%) and inference time (s) with different universe sizes $d$ are shown, with the experimental setup identical to that in Table~\ref{tab:tabel_fundus}. The first curve represents the \textit{No Adapt} baseline ($d = 0$), while the subsequent curve corresponds to the results obtained with a universe size of $ d = 120$.}
     \vspace{-0.5cm}
    \label{fig:universe_d}
\end{figure}

\vspace{-5pt}
\subsection{Effectiveness of the Universe Size}
The universe size $d$ directly impacts both the matching accuracy and computational efficiency, and its selection depends on the number of sampled nodes. Previous works~\cite{bernard2019hippi,wang2020graduated,nurlanov2023universe}, ensured that the $d$ matched the number of nodes in each graph by labeling an equal number of keypoints. In our experiments, $d$ is determined by the spatially-uniform sampling described in Sec~\ref{sec:graph_generation}, and we empirically set $d = 100\times (n+1) s^{-1}$, where $s$ is the sampling step and $n$ is the number of segmentation categories required for the task. We evaluated several $d$, with the results shown in Fig.~\ref{fig:universe_d}. We observe that excessively small or large step sizes—resulting in too many or too few sampled nodes—negatively impact both training speed and performance. However, when $d$ is within a reasonable range (in this experiment, $s \in[2, 10]$ i.e. $d \in [30, 150]$.), the model maintains good scalability and performance.

\begin{table}[htb!]
\vspace{-9pt}
\setlength{\tabcolsep}{8pt}
\centering
  \caption{Comparison of common variables and performance between pairwise matching and multi-matching, using the same experimental setup as in Table~\ref{tab:tabel_fundus}. Both methods follow the graph construction in Sec.~\ref{sec:graph_generation}.}
  \vspace{-0.3cm}
  \begin{adjustbox}{width=0.979\linewidth}
    \begin{tabular}{r|cc|l}
\hlineB{3}
\multirow{2}{*}{param} & \multicolumn{2}{c|}{Site A-E Average}              & \multirow{2}{*}{description}      \\ \cline{2-3}
&Pairwise~\cite{wang2019learning}         & Multi-Matching (Ours) & \\ \hline \hline
lr                     & $10^{-3}$ & $10^{-3}$ & learning rate \\
batch & 4 & 4  & batch size in  inference  \\
$h$ & 256 & 256  & dimension of node feature\\
$\tau$  & 0.05   & 0.05  & regularization factor of Sinkhorn \\
\textit{Iter} & 20 & 20 & max iterations of Sinkhorn        \\
$d$    & -  & 120 & universe size \\ \hline
DSC $\uparrow$  &  84.63   &  \textbf{88.46}  & dice score metric (\%) \\
time $\downarrow$& 0.930 &  \textbf{0.392}   & inference time per image (s/img)  \\
Param $\downarrow$ & 1.071 & \textbf{0.658} & parameter count (M) \\
FLOPs $\downarrow$&  15.35  &  \textbf{4.255}  &   floating point operations per second (G)  \\ \hlineB{3}
\end{tabular}

  \end{adjustbox}
  \label{tab:tabel_abl_matching}
\end{table}
\vspace{-0.5cm}
\subsection{Multi-Matching vs. Pairwise Matching}
Compared to pairwise matching, multi-matching uses cycle-consistency for global optimization, which helps avoid local optima in pairwise. Moreover, joint optimization reduces the computational complexity of large-scale matching. By incorporating morphological consistency constraints, multi-matching exhibits greater robustness. To validate these claims, we conducted experiments comparing the two approaches. 
For the pairwise matching method, we adopted the benchmark approach~\cite{wang2019learning} and applied the same graph construction process as in multi-matching. The comparison results are presented in Table~\ref{tab:tabel_abl_matching}. It is evident that multi-matching outperforms pairwise matching in terms of DSC, inference time, parameters and FLOPs.
The inference time of multi-matching was reduced by approximately 57.85\% compared to pairwise matching, while segmentation accuracy improved by 3.83\% in DSC.
\section{Conclusion}
This paper presents a novel multi-matching framework for TTDG, which leverages universe embeddings to incorporate morphological priors from medical images while ensuring cycle-consistency, along with an unsupervised test-time paradigm that fully integrates these priors for efficient adaptation.
Extensive experiments, including comparisons and ablation studies demonstrate that our graph-matching-based approach achieves SOTA performance, outperforming entropy-based, template-based, batch normalization-based, and noise estimation-based methods across both multi-source and single-source domain generalization tasks.

\newpage
\section*{Acknowledgements}

This work was supported in part by the National Natural Science Foundation of China (Grant No. 62306003), the Open Research Fund from Guangdong Laboratory of Artificial Intelligence and Digital Economy (SZ), under Grant No. GML-KF-24-29 and the Open Foundation of Jiangxi Provincial Key Laboratory of Image Processing and Pattern Recognition (ET202404437).
{
    \small
    \bibliographystyle{ieeenat_fullname}
    \bibliography{main}
    
}

\maketitlesupplementary

\newtheorem{lemma}{Lemma}
\setcounter{page}{1}
\setcounter{section}{0}
\setcounter{table}{0}
\setcounter{figure}{0}
\setcounter{equation}{0}

\renewcommand{\thefigure}{A\arabic{figure}} 
\renewcommand{\thetable}{A\arabic{table}} 
\renewcommand{\thesection}{A\arabic{section}} 
\renewcommand{\thesubsection}{A\arabic{subsection}} 


\section{Theoretical Analysis}
\begin{lemma}[Cycle-consistency, Universe Matching]
Given a set of pairwise (partial) matching matrices $\{\mathbf{X}_{ij}\}_{i,j=1}^m$, it is cycle-consistent iff there exists a collection of universe matching matrices \(\{ U_i \in \mathbb{U}_{n_i, d} \}_{i=1}^m\) such that for each graph pair $(\mathcal{G}_i, \mathcal{G}_j)$, we have
\begin{equation}
    \mathbf{X}_{ij} = U_i U_j^{\mathsf{T}}.\nonumber
\end{equation}
\end{lemma}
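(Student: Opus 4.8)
The plan is to prove both directions of the equivalence by exhibiting explicit constructions. For the ``if'' direction (sufficiency), suppose universe matchings $\{U_i \in \mathbb{U}_{n_i,d}\}_{i=1}^m$ exist with $\mathbf{X}_{ij} = U_i U_j^{\mathsf T}$ for every pair. I would first verify that each such $\mathbf{X}_{ij}$ is indeed a valid partial permutation matrix, i.e. $\mathbf{X}_{ij} \in \mathbb{P}_{n_i n_j}$: since $U_i$ has $0/1$ entries with at most one $1$ per column and exactly one $1$ per row (from the definition of $\mathbb{U}_{n_i,d}$), the product $U_i U_j^{\mathsf T}$ has $0/1$ entries and the row/column sum inequalities follow directly. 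Then cycle-consistency is immediate: for any $i,j,k$,
\begin{equation}
\mathbf{X}_{ik}\mathbf{X}_{kj} = U_i U_k^{\mathsf T} U_k U_j^{\mathsf T} \le U_i U_j^{\mathsf T} = \mathbf{X}_{ij},
\end{equation}
where the inequality uses $U_k^{\mathsf T} U_k \le \mathbf{I}_d$ entrywise (again because each column of $U_k$ has at most one $1$), together with the non-negativity of $U_i$ and $U_j$ so that left/right multiplication preserves the entrywise order.

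For the ``only if'' direction (necessity), which is the substantive part, I would construct the universe explicitly from the equivalence-class structure induced by the matchings. Consider the disjoint union of all node sets $\bigsqcup_{i=1}^m \mathcal{V}_i$ and define a relation: node $a \in \mathcal{V}_i$ is related to node $b \in \mathcal{V}_j$ if $(\mathbf{X}_{ij})_{ab} = 1$ (and every node is related to itself). Cycle-consistency, applied with $i=j=k$ to force $\mathbf{X}_{ii}$ to contain the identity on matched nodes, plus the transitivity encoded by $\mathbf{X}_{ik}\mathbf{X}_{kj} \le \mathbf{X}_{ij}$ and the symmetry $\mathbf{X}_{ji} = \mathbf{X}_{ij}^{\mathsf T}$, shows this relation is an equivalence relation on the subset of nodes that participate in at least one match; each isolated (unmatched) node forms its own singleton class. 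Let $d$ be the total number of equivalence classes and fix an enumeration $c_1,\dots,c_d$ of them. Define $U_i \in \{0,1\}^{n_i \times d}$ by $(U_i)_{a\ell} = 1$ iff node $a\in\mathcal{V}_i$ belongs to class $c_\ell$. Each row of $U_i$ has exactly one $1$ (every node lies in exactly one class), and each column has at most one $1$ (a single graph cannot have two distinct nodes in the same class, since partiality of $\mathbf{X}_{ii}$ forbids $\mathbf{X}_{ii}$ from matching two distinct nodes of $\mathcal{G}_i$ — here I would need to check that $\mathbf{X}_{ii}$ restricted to $\mathcal{G}_i$ is a sub-identity, which follows from taking $k$ arbitrary and $i=j$ in the cycle constraint combined with the partial-permutation column/row bounds). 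Hence $U_i \in \mathbb{U}_{n_i,d}$. Finally $(U_i U_j^{\mathsf T})_{ab} = 1$ iff $a$ and $b$ lie in the same class iff $(\mathbf{X}_{ij})_{ab}=1$, giving $\mathbf{X}_{ij} = U_i U_j^{\mathsf T}$.

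The main obstacle is the necessity direction, specifically justifying that the relation defined by the $\{\mathbf{X}_{ij}\}$ is a genuine equivalence relation — reflexivity and transitivity on the matched nodes must be squeezed out of the inequality $\mathbf{X}_{ik}\mathbf{X}_{kj}\le\mathbf{X}_{ij}$ rather than an equality, and one must handle the partial/unmatched nodes carefully so they do not spuriously collapse classes. A subtle point worth spelling out is why two distinct nodes of the \emph{same} graph never end up in one class: this requires that $\mathbf{X}_{ii}$ be dominated by the identity, which itself needs a short argument from cycle-consistency (e.g. $\mathbf{X}_{ij}\mathbf{X}_{ji}\le\mathbf{X}_{ii}$ and the partial-permutation constraints bounding row sums by $1$). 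Once the equivalence-class picture is in place, the construction of the $U_i$ and the verification $\mathbf{X}_{ij}=U_iU_j^{\mathsf T}$ are routine. I would also remark that $d$ may need to be padded with empty columns if a fixed universe size larger than the number of classes is desired, which does not affect membership in $\mathbb{U}_{n_i,d}$.
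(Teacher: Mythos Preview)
Your proposal is correct and is in fact considerably more complete than the paper's own argument. The paper only establishes the sufficiency direction, and it does so by \emph{assuming} $U_j^{\mathsf T}U_j = I$ (calling $U_j$ ``orthogonal''), from which it derives the equality form $\mathbf{X}_{ij}\mathbf{X}_{jk}=\mathbf{X}_{ik}$. That assumption is not implied by $U_j\in\mathbb{U}_{n_j,d}$ in general---it holds only when every universe node is hit, i.e.\ essentially the full-matching case $n_j=d$. Your use of the entrywise inequality $U_k^{\mathsf T}U_k\le \mathbf{I}_d$ is the correct statement for partial universe matchings and yields exactly the partial-transitivity inequality of Definition~1, which is what the lemma actually asserts.

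The paper does not attempt the necessity direction at all, so your equivalence-class construction is genuinely additional content. Your outline there is sound; the one place that deserves care (and which you already flag) is showing that two distinct nodes $a\ne a'$ of the same $\mathcal{G}_i$ cannot share a class. Your sketch works: if $a$ participates in any match then $\mathbf{X}_{ij}\mathbf{X}_{ji}\le\mathbf{X}_{ii}$ forces $(\mathbf{X}_{ii})_{aa}=1$, and the row-sum bound on the partial permutation $\mathbf{X}_{ii}$ then forbids $(\mathbf{X}_{ii})_{aa'}=1$ for $a'\ne a$, while transitivity of the relation would have produced exactly that entry. One implicit hypothesis you rely on is the symmetry $\mathbf{X}_{ji}=\mathbf{X}_{ij}^{\mathsf T}$; this is standard in the multi-graph-matching literature but is not stated in the paper's Definition~1, so you may want to make it explicit as an assumption. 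With that caveat, your argument is both more general (handles the partial case properly) and more complete (covers both directions of the ``iff'') than what the paper presents.
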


\begin{proof}
We need to prove that the matching matrices $\{\mathbf{X}_{ij}\}_{i,j=1}^m$ satisfy cycle-consistency, which means that:
\begin{equation}
    \mathbf{X}_{ij} \mathbf{X}_{jk} = \mathbf{X}_{ik}. \forall i, j, k\in [m]
\end{equation}
 
By the assumption of the \textit{lemma 1}, there exists a collection of universe matching matrices $\{ U_i \}_{i=1}^m$ such that for each pair $(\mathcal{G}_i, \mathcal{G}_j)$,
\begin{equation}
    \mathbf{X}_{ij} = U_i U_j^{\mathsf{T}}.
\end{equation}
Therefore, we have
\begin{equation}
    \mathbf{X}_{ij} = U_i U_j^{\mathsf{T}}, \quad \mathbf{X}_{jk} = U_j U_k^{\mathsf{T}}, \quad \mathbf{X}_{ik} = U_i U_k^{\mathsf{T}}.
\end{equation}
Now we compute $\mathbf{X}_{ij} \mathbf{X}_{jk}$:
\begin{align}
    \mathbf{X}_{ij} \mathbf{X}_{jk} &= (U_i U_j^{\mathsf{T}})(U_j U_k^{\mathsf{T}}) \nonumber \\ 
    &= U_i (U_j^{\mathsf{T}} U_j) U_k^{\mathsf{T}}. 
    \label{eq.6}
\end{align}

Assume that each matrix $U_i$ satisfies $U_i^{\mathsf{T}} U_i = I$ (i.e., $U_i$ is an orthogonal matrix or has unit inner product property). 
Thus, Eq. (\ref{eq.6}) simplifies further to:
\begin{equation}
    \mathbf{X}_{ij} \mathbf{X}_{jk} = U_i I U_k^{\mathsf{T}} = U_i U_k^{\mathsf{T}} = \mathbf{X}_{ik}.
\end{equation}

This shows that the matching matrices $\{\mathbf{X}_{ij}\}_{i,j=1}^m$ satisfy cycle-consistency. 
\end{proof}

\section{Algorithm Pipeline}
\label{sec:algo}
Algorithms~\ref{algo:train} and \ref{algo:test} outline the procedures for the source training phase and the test-time adaptation phase, respectively, while Algorithm~\ref{algo:hippi} provides a detailed explanation of the HiPPI~\cite{bernard2019hippi} method used in Algorithm~\ref{algo:train}.

\section{Additional Experiments}
\subsection{Single Source DG in Retinal Fundus}
\begin{table*}[h!]
\centering
  \caption{Single source domain generalization in the retinal fundus segmentation. The performance (mean $\pm$ standard deviation) of three trials for our method and eight SOTA methods. ``A $\rightarrow \{B,C,D,E\}$'' represents models trained on Site A and tested on the mixed distribution of Sites B-E, and similar for others. Best results are colored as \textcolor{red}{red}.}
  \begin{adjustbox}{width=0.99\linewidth}

\begin{tabular}{c|ccccccccccccccc|ccc}
\hlineB{3}
\multirow{2}{*}{Methods} & \multicolumn{3}{c}{A $\rightarrow \{B,C,D,E\}$} & \multicolumn{3}{c}{B $\rightarrow  \{A,C,D,E\}$} & \multicolumn{3}{c}{C $\rightarrow  \{A,B,D,E\}$} & \multicolumn{3}{c}{D $\rightarrow  \{A,B,C,E\}$} & \multicolumn{3}{c|}{E $\rightarrow  \{A,B,C,D\}$} & \multicolumn{3}{c}{Average} \\ \cline{2-19} 
                         & \textit{DSC}       & $E_\phi^{max}$       & $S_{\alpha}$      & \textit{DSC}       & $E_\phi^{max}$       & $S_{\alpha}$     & \textit{DSC}       & $E_\phi^{max}$       & $S_{\alpha}$    & \textit{DSC}       & $E_\phi^{max}$       & $S_{\alpha}$  & \textit{DSC}       & $E_\phi^{max}$       & $S_{\alpha}$      & \textit{DSC} $\uparrow$      & $E_\phi^{max}$ $\uparrow$       & $S_{\alpha}$ $\uparrow$    \\ \hline \hline
\textit{No Adapt} (U-Net~\cite{ronneberger2015u}) &70.60\small{$\pm$10.01} & 86.92\small{$\pm$1.17} & 80.36\small{$\pm$0.88} & 77.08\small{$\pm$6.90} & 91.58\small{$\pm$0.84} & 85.44\small{$\pm$0.99} & 66.24\small{$\pm$8.45} & 86.49\small{$\pm$0.77} & 80.01\small{$\pm$0.91} & 71.21\small{$\pm$9.47} & 82.90\small{$\pm$1.48} & 80.11\small{$\pm$0.88} & 72.26\small{$\pm$7.60} & 86.51\small{$\pm$1.02} & 86.05\small{$\pm$0.68} & 71.47 & 86.88 & 82.39\\ \hline
TASD (AAAI'22)~\cite{liu2022single} & 79.89\small{$\pm$5.91} & 93.26\small{$\pm$0.21} & 87.12\small{$\pm$0.13} & 82.63\small{$\pm$3.24} & 93.20\small{$\pm$0.25} & 86.00\small{$\pm$0.10} & 78.03\small{$\pm$4.29} & 92.47\small{$\pm$0.14} & 86.71\small{$\pm$0.09} & 76.30\small{$\pm$7.81} & 86.09\small{$\pm$0.15} & 80.94\small{$\pm$0.11} & 79.99\small{$\pm$1.29} & 93.24\small{$\pm$0.10} & 87.08\small{$\pm$0.08} & 79.36 & 91.65 & 85.57  \\

DLTTA (TMI'22)~\cite{yang2022dltta} & 74.96\small{$\pm$7.20} & 89.24\small{$\pm$0.25} & 84.02\small{$\pm$0.11} & 78.27\small{$\pm$5.66} & 92.40\small{$\pm$0.40} & 85.36\small{$\pm$0.11} & 75.84\small{$\pm$5.14} & 90.96\small{$\pm$0.20} & 84.11\small{$\pm$0.10} & 65.55\small{$\pm$9.35} & 84.80\small{$\pm$0.18} & 78.33\small{$\pm$0.08} & 71.68\small{$\pm$4.99} & 87.79\small{$\pm$0.17} & 85.13\small{$\pm$0.09} & 73.26 & 89.03 & 83.39\\

SAR (ICLR'23)~\cite{niu2023towards} & 74.20\small{$\pm$6.09} & 89.07\small{$\pm$0.33} & 83.64\small{$\pm$0.10} & 80.34\small{$\pm$2.86} & 92.70\small{$\pm$0.41} & 85.95\small{$\pm$0.12} & 72.58\small{$\pm$4.46} & 91.20\small{$\pm$0.20} & 83.47\small{$\pm$0.09} & 70.30\small{$\pm$8.98} & 85.10\small{$\pm$1.09} & 79.42\small{$\pm$0.72} & 70.31\small{$\pm$5.77} & 88.56\small{$\pm$0.81} & 86.79\small{$\pm$0.30} & 73.54 & 89.32 & 83.85\\

DomainAdaptor (CVPR'23)~\cite{zhang2023domainadaptor} & 77.23\small{$\pm$3.97} & 90.22\small{$\pm$0.40} & 84.20\small{$\pm$0.11} & 76.41\small{$\pm$4.28} & 91.80\small{$\pm$0.31} & 85.73\small{$\pm$0.12} & 70.17\small{$\pm$8.01} & 91.32\small{$\pm$0.24} & 83.50\small{$\pm$0.09} & 67.39\small{$\pm$9.82} & 84.16\small{$\pm$0.21} & 78.02\small{$\pm$0.08} & 76.97\small{$\pm$4.59} & 89.30\small{$\pm$0.13} & 85.46\small{$\pm$0.08} & 73.63 & 89.36 & 83.38 \\

DeY-Net (WACV'24)~\cite{wen2024denoising} 
 & 80.03\small{$\pm$8.31} &94.42\small{$\pm$0.20} & 86.35\small{$\pm$0.84} & {84.30\small{$\pm$7.09}} & \textcolor{red}{94.25\small{$\pm$0.23}} & \textcolor{red}{87.16\small{$\pm$0.47}} & 80.32\small{$\pm$7.85} & 93.40\small{$\pm$0.32} & 88.41\small{$\pm$0.33} & 78.67\small{$\pm$5.31} & 86.12\small{$\pm$0.78} & 80.45\small{$\pm$0.30} & 76.81\small{$\pm$3.79} &90.09\small{$\pm$0.55} & 86.30\small{$\pm$0.25} & 80.02 & 91.65&85.73\\

VPTTA (CVPR'24)~\cite{chen2024each} &73.57\small{$\pm$6.60} & 92.68\small{$\pm$0.03} & 84.14\small{$\pm$0.01} & 78.21\small{$\pm$2.40} & 94.07\small{$\pm$0.09} & 86.16\small{$\pm$0.01} & 69.26\small{$\pm$4.29} & 92.78\small{$\pm$0.08} & 82.66\small{$\pm$0.02} & 60.11\small{$\pm$8.05} & 85.18\small{$\pm$0.10} & 76.24\small{$\pm$0.03} & 72.58\small{$\pm$5.21} & 91.16\small{$\pm$0.13} & 84.74\small{$\pm$0.04} & 70.74 & 91.17 & 82.78\\ 

NC-TTT (CVPR'24)~\cite{osowiechi2024nc} & 78.21\small{$\pm$2.74} & 93.87\small{$\pm$0.25} & 85.49\small{$\pm$0.11} & 82.13\small{$\pm$3.30} & 93.19\small{$\pm$0.29} & 86.78\small{$\pm$0.08} & 77.50\small{$\pm$5.29} & 91.99\small{$\pm$0.14} & 84.08\small{$\pm$0.03} & 74.14\small{$\pm$3.50} & 87.53\small{$\pm$0.25} & 80.56\small{$\pm$0.11} & 80.53\small{$\pm$1.08} & 92.73\small{$\pm$0.10} & 85.81\small{$\pm$0.07} & 78.50 & 91.86 & 84.54
  \\  

\hline

Ours & \textcolor{red}{85.25\small{$\pm$2.33}} &  \textcolor{red}{94.68\small{$\pm$0.09}} & \textcolor{red}{88.52\small{$\pm$0.13}} & \textcolor{red}{85.34\small{$\pm$3.08}} & 93.18\small{$\pm$0.20} & 86.83\small{$\pm$0.11} & \textcolor{red}{86.19\small{$\pm$1.99}} & \textcolor{red}{94.57\small{$\pm$0.20}} & \textcolor{red}{89.24\small{$\pm$0.13}} & \textcolor{red}{81.52\small{$\pm$4.25}} & \textcolor{red}{91.20\small{$\pm$0.30}} & \textcolor{red}{84.53\small{$\pm$0.28}} & \textcolor{red}{86.08\small{$\pm$3.08}} & \textcolor{red}{94.60\small{$\pm$0.23}} & \textcolor{red}{88.58\small{$\pm$0.11}} & \textcolor{red}{84.87} & \textcolor{red}{93.64} & \textcolor{red}{87.54} \\ 

\hlineB{3}
\end{tabular}

  \end{adjustbox}
  \label{tab:tabel_single_fundus}
\end{table*}

For the retinal fundus segmentation task, we conducted single-source domain generalization experiments. Unlike the experiments described in the main text, this setup simulates a more realistic scenario where test data may originate from arbitrarily complex real-world distributions, i.e., mixed distribution shifts. Specifically, data from one site was selected and split $8:2$ into training and validation sets ($S=1$), while the remaining sites ($T=|\mathcal{D}_s \cup \mathcal{D}_t|-1$) were shuffled and used entirely as the testing dataset. Notably, all models encountered these target domains for the first time during testing.

As shown in Table~\ref{tab:tabel_single_fundus}, our approach achieved SOTA performance across all five transfer experiments for the DSC metric. In the average results, we outperformed the second-best method (DeY-Net~\cite{wen2024denoising}) by 4.85\%, 1.78\%, and 1.81\% in the DSC, $E_\phi^{max}$, and $S_{\alpha}$, respectively. These results validate the effectiveness of our method for medical image segmentation tasks.
        
The retinal fundus dataset is characterized by significant low-level visual differences and features segmentation targets that are not singular, often exhibiting overlapping and fixed structures. We attribute our superior performance to the comprehensive learning of the morphological knowledge of organs. This enables our method to robustly distinguish organ instances and their shape features—a domain-invariant property—even under severe domain shifts that degrade the performance of other methods.

\subsection{Multi Source DG in MRI Prostate}
\noindent \textbf{Datasets.}
We conducted experiments on the prostate segmentation task using T2-weighted MRI scans collected from six different clinical centers, denoted as Domain RUNMC, BMC, I2CVB, UCL, BIDMC, and HK. These centers are sourced from three publicly available datasets: NCI-ISBI13~\cite{nicholas2015nci}, I2CVB~\cite{lemaitre2015computer}, and PROMISE12~\cite{litjens2014evaluation}.

\noindent \textbf{Implementation Details.}
We followed the data preprocessing pipeline of \cite{zhang2024pass} to ensure consistency. Specifically, we used 30 labeled cases from RUNMC as the source dataset and evaluated the model on 30, 19, 13, 12, and 12 unlabeled cases from the five remaining clinical sites. Each MRI axial slice was resized to $384 \times 384$ pixels and normalized to have zero mean and unit variance. Before normalization, we clipped the 5\%–95\% intensity range of the histograms to reduce outlier influence.

For feature extraction, we employed a ResNet-50 backbone pre-trained on ImageNet. During both the source model training and test-time adaptation (TTA) stages, we maintained a batch size of 8.
Given that edge precision is crucial in MRI prostate segmentation, and considering the complex shape variations of the prostate, we selected Dice Score (DSC) and Hausdorff Distance (HD95) as the primary evaluation metrics to provide a comprehensive performance assessment.

\noindent \textbf{Experimental Results.}
\begin{table*}[h!]
\centering
  \caption{Test-time domain generalization results on the MRI prostate datasets.  The performance (mean $\pm$ standard deviation) of three trials for our method and six SOTA methods. Best results are colored as \textcolor{red}{red}.}
  \begin{adjustbox}{width=0.969\linewidth}
    \begin{tabular}{c|cccccccccc|cc}
\hlineB{3}
\multirow{2}{*}{Methods} & \multicolumn{2}{c}{BMC} & \multicolumn{2}{c}{I2CVB} & \multicolumn{2}{c}{UCL} & \multicolumn{2}{c}{BIDMC}  & \multicolumn{2}{c|}{HK} & \multicolumn{2}{c}{Avg.}   \\ \cline{2-13} 
                         & \textit{DSC}      &  HD95    & \textit{DSC}      &  HD95& \textit{DSC}      &  HD95& \textit{DSC}     &  HD95  & \textit{DSC}      &  HD95  & \textit{DSC} $\uparrow$     &  HD95 $\downarrow$ \\ \hline \hline

\textit{No Adapt} & 74.30\small{$\pm$5.31} & 16.08\small{$\pm$12.41} & 66.47\small{$\pm$13.50} & 37.16\small{$\pm$18.24} & 75.28\small{$\pm$6.20} & 16.77\small{$\pm$12.10} & 52.08\small{$\pm$7.71} & 50.09\small{$\pm$20.85} & 80.51\small{$\pm$9.35} & 8.79\small{$\pm$9.06} & 69.72\small{$\pm$8.29} & 25.77\small{$\pm$15.41}  \\  \hline

TENT (ICLR'21)~\cite{wangtent} & 77.45\small{$\pm$3.79} & 12.09\small{$\pm$9.88} & 69.10\small{$\pm$10.47} & 30.78\small{$\pm$19.22} & 79.69\small{$\pm$4.81} & 14.71\small{$\pm$11.01} & 52.01\small{$\pm$6.80} & 42.63\small{$\pm$10.13} & 84.58\small{$\pm$2.73} & 4.07\small{$\pm$5.38} & 72.56\small{$\pm$4.26} & 20.85\small{$\pm$13.64} \\

TASD (AAAI'22)~\cite{liu2022single} & 76.28\small{$\pm$2.35} & 15.11\small{$\pm$15.17} & 68.30\small{$\pm$7.88} & 31.43\small{$\pm$24.10} & 80.25\small{$\pm$3.54} & \textcolor{red}{10.59\small{$\pm$16.39}} & 56.08\small{$\pm$3.82} & 51.90\small{$\pm$24.82} & 81.09\small{$\pm$1.79} & 4.26\small{$\pm$4.16} & 72.40\small{$\pm$5.72} & 22.65\small{$\pm$17.53}  \\

SAR (ICLR'23)~\cite{niu2023towards} & 77.24\small{$\pm$4.26} & 20.48\small{$\pm$10.12} & 68.99\small{$\pm$8.27} & 49.07\small{$\pm$15.66} & 79.27\small{$\pm$8.48} & 18.03\small{$\pm$5.89} & 50.81\small{$\pm$10.60} & 54.35\small{$\pm$19.31} & 85.40\small{$\pm$3.08} & 3.87\small{$\pm$3.55} & 72.34\small{$\pm$4.80} & 29.16\small{$\pm$16.28} \\

DomainAdaptor (CVPR'23)~\cite{zhang2023domainadaptor} & 76.49\small{$\pm$2.59} & 19.27\small{$\pm$8.13} & 69.07\small{$\pm$9.14} & 32.57\small{$\pm$10.40} & 80.41\small{$\pm$5.08} & 16.24\small{$\pm$9.88} & 49.99\small{$\pm$14.28} & 48.40\small{$\pm$10.28} & 85.20\small{$\pm$1.90} & 3.25\small{$\pm$6.94} & 72.23\small{$\pm$6.82} & 23.94\small{$\pm$14.55} \\

VPTTA (CVPR'24)~\cite{chen2024each}& 77.42\small{$\pm$4.38} & 12.93\small{$\pm$7.09} & 70.25\small{$\pm$5.18} & 30.01\small{$\pm$13.68} 
 & 82.07\small{$\pm$6.27} & 13.28\small{$\pm$18.09} & 57.49\small{$\pm$8.46} & 40.11\small{$\pm$12.05} & 83.27\small{$\pm$2.96} & 3.40\small{$\pm$5.45} & 74.10\small{$\pm$4.79} & 19.94\small{$\pm$12.99}   \\

PASS (TMI'24)~\cite{zhang2024pass} & \textcolor{red}{80.07\small{$\pm$7.14}} & 10.50\small{$\pm$9.57} &71.41\small{$\pm$6.28} & 28.26\small{$\pm$9.97} & 84.39\small{$\pm$8.81} & 10.68\small{$\pm$12.27} & 57.27\small{$\pm$11.48} & \textcolor{red}{36.94\small{$\pm$16.43}} & 84.88\small{$\pm$3.71} & 3.03\small{$\pm$5.05} & 75.60\small{$\pm$5.13} & 17.88\small{$\pm$12.14} \\ \hline

Ours & 79.63\small{$\pm$4.71} & \textcolor{red}{9.99\small{$\pm$11.10}} & \textcolor{red}{74.09\small{$\pm$9.80}} & \textcolor{red}{25.70\small{$\pm$13.07}} & \textcolor{red}{86.30\small{$\pm$7.25}} & 11.08\small{$\pm$17.47} & \textcolor{red}{60.33\small{$\pm$13.59}} & 39.52\small{$\pm$14.20}  & \textcolor{red}{86.12\small{$\pm$2.08}} & \textcolor{red}{2.84\small{$\pm$8.08}} & \textcolor{red}{77.29\small{$\pm$3.98}} & \textcolor{red}{17.82\small{$\pm$11.06}}  \\

\hlineB{3}
\end{tabular}

  \end{adjustbox}
  \label{tab:tabel_prostate}
\end{table*}
The MRI prostate segmentation results are presented in Table~\ref{tab:tabel_prostate}, where we compare our method against several SOTA approaches, including the latest TTA segmentation method, PASS~\cite{zhang2024pass}.
As shown in the results, the performance of existing TTA methods remains relatively close across both DSC and HD95 metrics. While PASS exhibits strong segmentation performance, our method surpasses it with a 1.69\% improvement in DSC, demonstrating its effectiveness.
Given the inherent challenges of MRI prostate segmentation, characterized by diverse imaging modalities and complex morphological variations, our results highlight the robust generalization capability of our approach. Nonetheless, further enhancing edge precision remains an important focus for our future work.

\subsection{Natural Image Classification}
\noindent \textbf{Datasets.} For natural image classification tasks, we selected two benchmark datasets: PACS~\cite{li2017deeper} and VLCS~\cite{fang2013unbiased}, which are widely used in domain generalization and test-time adaptation studies. 
The PACS~\cite{li2017deeper} dataset consists of large images spanning 7 classes evenly distributed across 4 domains, i.e. A (Art), C (Cartoons), P (Photos), and S (Sketches), with a total of 9,991 images. 
The VLCS~\cite{fang2013unbiased} dataset comprises 10,729 images across 5 classes (bird, car, chair, dog, and person), evenly distributed across 4 domains: C (Caltech101), L (LabelMe), S (SUN09), and V (VOC2007).

\noindent \textbf{Source model training.} For all experiments, we employed an ImageNet-pretrained ResNet-50~\cite{he2016deep} as the feature extractor, with an MLP layer provided by the DomainBed~\cite{gulrajanisearch} benchmark serving as the classifier. We used the SGD optimizer with a learning rate of $1 \times 10^{-5}$. The batch size was set to 32, and training was conducted for 10,000 iterations. All images were resized to $224 \times 224$, and data augmentation techniques—including random cropping, flipping, color jittering, and intensity adjustments—were applied during source training.

\noindent \textbf{Implementation details of test-time adaptation setup.} We evaluated our framework against six methods (i.e. Empirical Risk Minimization (ERM)~\cite{vapnik1998statistical}, DomainAdaptor~\cite{zhang2023domainadaptor}, ITTA~\cite{chen2023improved}, VPTTA~\cite{chen2024each}, NC-TTT~\cite{osowiechi2024nc}) under fair comparison conditions, following the leave-one-out training strategy using the publicly available DomainBed~\cite{gulrajanisearch} framework. For deploying our framework at the test-time phase, we employed SGD with a learning rate of 0.005, a batch size of 16, and a universe size of $d = 60$. Notably, as all images in the natural image classification task contain only a single instance class, the class-wise similarity matrix described in Section 3.2 of the main text was not utilized.

\noindent \textbf{Experimental Results.} The classification results across different domains for natural images are presented in Tables~\ref{tab:tabel_natural_pacs} and \ref{tab:tabel_natural_vlcs}. While the ERM method shows strong performance compared to existing approaches, our method achieves higher classification accuracy, surpassing ERM by 3.01\% on the PACS dataset and 2.49\% on the VLCS dataset. Additionally, our approach demonstrates competitive performance against state-of-the-art test-time adaptation methods designed for natural images. Natural images present greater challenges compared to medical images due to the lack of consistent morphological priors typically observed in the latter. However, unlike segmentation tasks, classification does not require determining the specific class of every pixel in an image. Our framework’s graph construction effectively captures spatial correspondences for each instance, further enhancing its performance.

\begin{table}[h!]
\centering
  \caption{Test-time domain generalization results on the PACS~\cite{li2017deeper} dataset using a ResNet-50 backbone. Each column (A, C, P, S) indicates the domain used as the test set, while the remaining domains are used for training. The best results are highlighted in \textcolor{red}{red}.}
  \begin{adjustbox}{width=0.969\linewidth}
    \begin{tabular}{c|ccccc}
    \hlineB{3}
        Method & A & C & P & S & Avg. \\ \hline
        ERM~\cite{vapnik1998statistical} & 84.07 & 80.21 & 97.06 & 81.99 & 85.83 \\ \hline
        TENT (ICLR'21)~\cite{wangtent} & 82.34 & 78.63 & 97.93 & 82.72 & 85.40 \\ 
        DomainAdaptor (CVPR'23)~\cite{zhang2023domainadaptor} & 86.15 & 82.02 & 98.40 & 84.38 & 88.45 \\ 
        ITTA (CVPR'23)~\cite{chen2023improved}& 85.63 & \textcolor{red}{84.30} & 97.27 & 84.09 & 87.82 \\
        VPTTA (CVPR'24)~\cite{chen2024each} & \textcolor{red}{86.50}  & 83.77 & 97.09 & 85.10 & 88.12 \\ 
        NC-TTT (CVPR'24)~\cite{osowiechi2024nc}& 83.81 & 80.44 & 96.53 & 82.36 & 85.79 \\ 
        Ours & 85.08 & 83.93 & \textcolor{red}{98.61} & \textcolor{red}{87.76} & \textcolor{red}{88.84} \\ \hlineB{3}
\end{tabular}
  \end{adjustbox}
  \label{tab:tabel_natural_pacs}
\end{table}

\begin{table}[h!]
\centering
  \caption{Test-time domain generalization results on the VLCS~\cite{fang2013unbiased} dataset using a ResNet-50 backbone. Each column (C, L, S, V) indicates the domain used as the test set, while the remaining domains are used for training. The best results are highlighted in \textcolor{red}{red}.}
  \begin{adjustbox}{width=0.969\linewidth}
    \begin{tabular}{c|ccccc}
    \hlineB{3}
        Method & C & L & S & V & Avg. \\ \hline
        ERM~\cite{vapnik1998statistical} & 97.63 & 64.20 & 70.39 & 74.41 & 76.66 \\ \hline
        TENT (ICLR'21)~\cite{wangtent} & 96.88 & 64.46 & 71.07 & 73.52 & 76.48 \\ 
        DomainAdaptor (CVPR'23)~\cite{zhang2023domainadaptor} & \textcolor{red}{98.69} & \textcolor{red}{69.18} & 73.66 & 76.01 & \textcolor{red}{79.39} \\ 
        ITTA (CVPR'23)~\cite{chen2023improved}& 97.30 & 66.09 & 72.31 & 75.10 & 77.70 \\
        VPTTA (CVPR'24)~\cite{chen2024each} & 97.25  & 67.69 & 71.78 & 75.22 & 77.98 \\ 
        NC-TTT (CVPR'24)~\cite{osowiechi2024nc}& 96.72 & 65.58 & 73.04 & \textcolor{red}{76.83} & 78.04 \\ 
        Ours & 98.49 & 68.72 & \textcolor{red}{74.12} & 75.30 & 79.15 \\ \hlineB{3}
\end{tabular}
  \end{adjustbox}
  \label{tab:tabel_natural_vlcs}
\end{table}

\subsection{Additional Visualization}
We conducted additional visualization experiments, with segmentation results for retinal fundus and polyp images shown in Figures \ref{fig:vis_appendix_fundus} and \ref{fig:vis_appendix_poloy}, respectively. Each row represents images from a distinct domain (Site), and we ensured that the model performing inference had not encountered images from that domain before. 

\begin{figure*}[!t]
    \centering
    \includegraphics[width=0.999\linewidth]{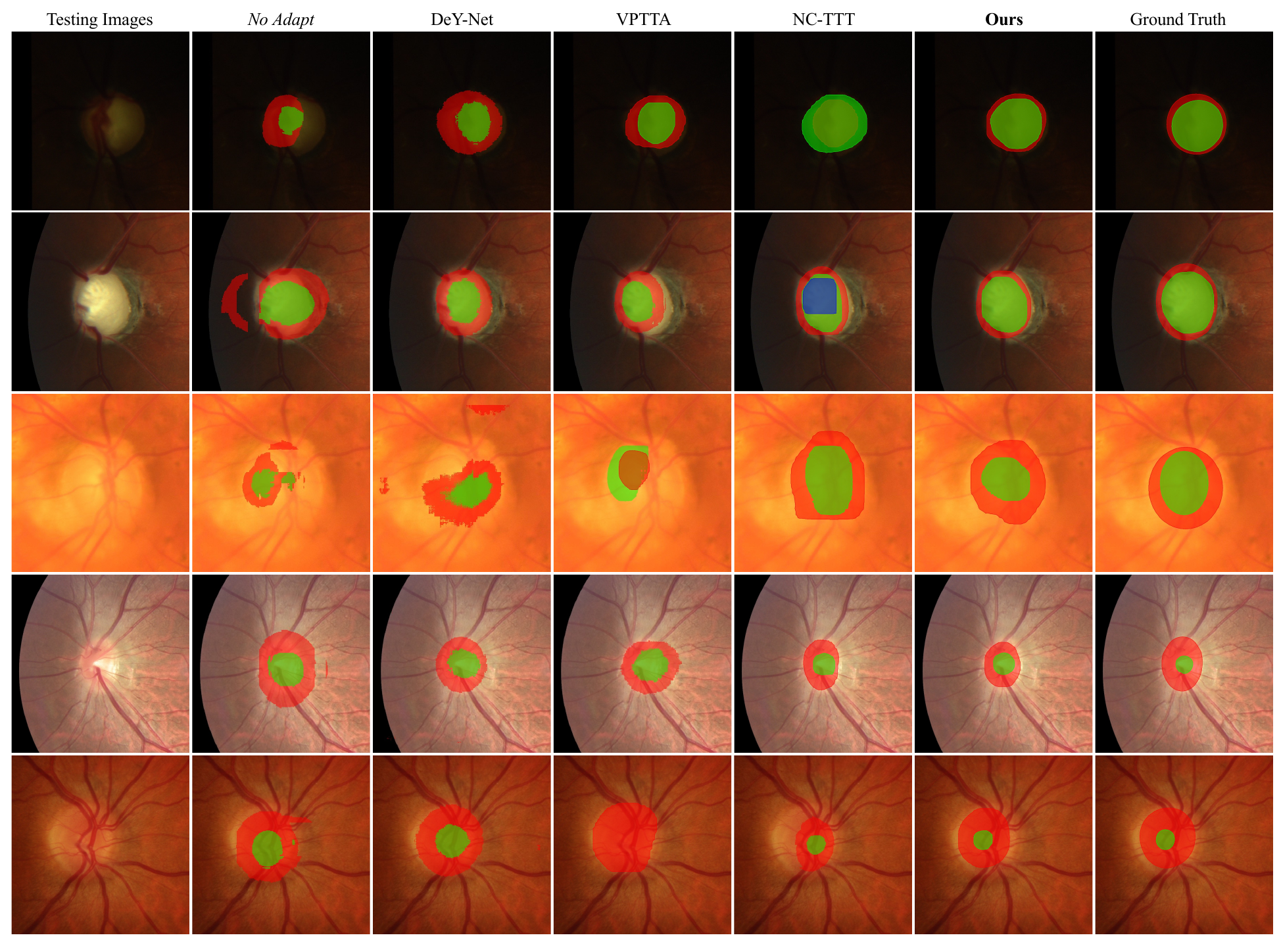}
    \caption{Visualization comparison of segmentation results for the \textit{No Adapt} baseline, DeY-Net~\cite{wen2024denoising}, VPTTA~\cite{chen2024each}, NC-TTT~\cite{osowiechi2024nc}, and our method in retinal fundus segmentation. The five rows from top to bottom display the final segmentation results for tests conducted on Sites A to E. Different colors represent the segmentation instances of different classes identified by the network.}
    \label{fig:vis_appendix_fundus}
\end{figure*}

\begin{figure*}[!t]
    \centering
    \includegraphics[width=0.999\linewidth]{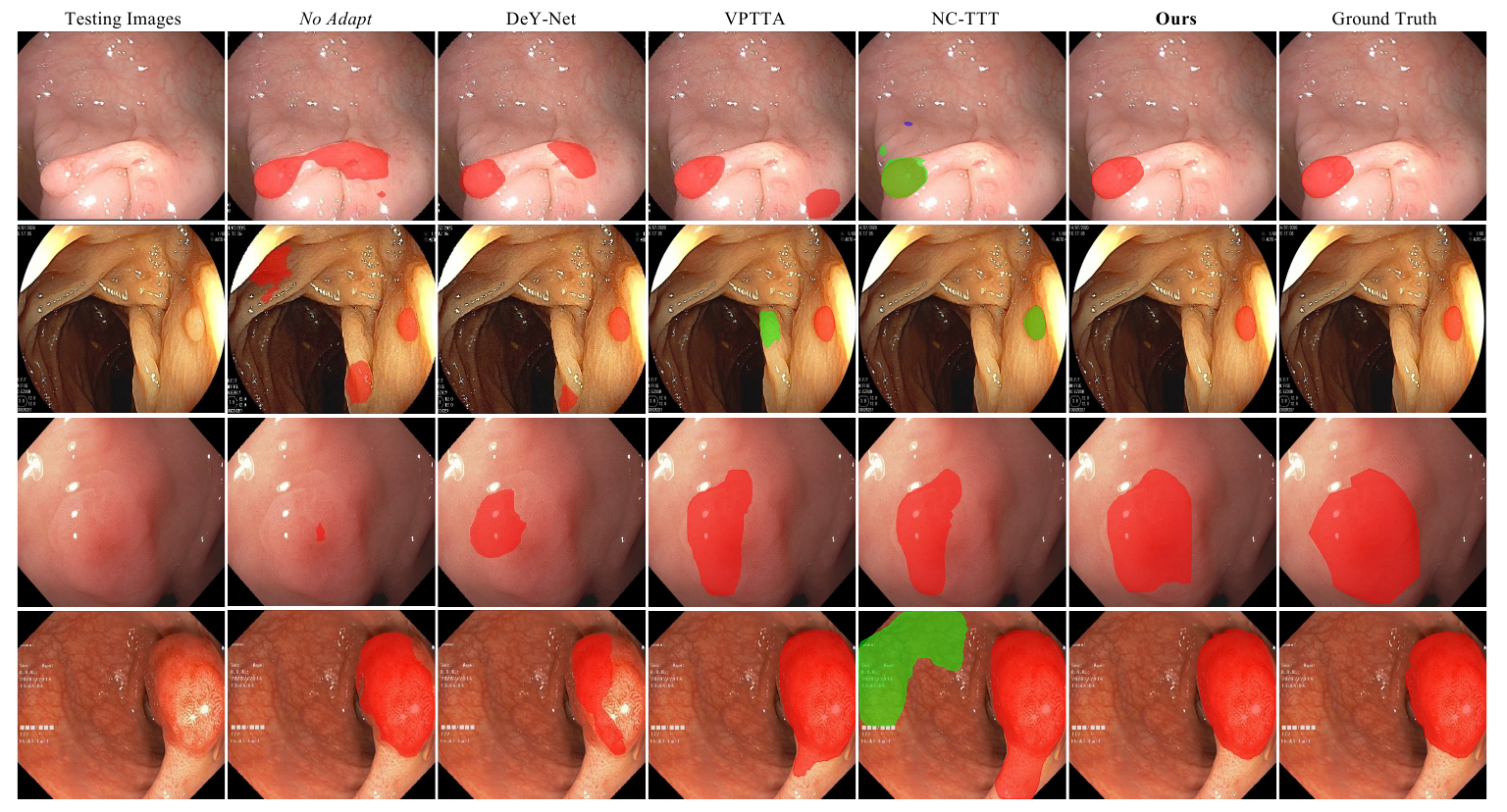}
    \caption{Visualization comparison of segmentation results for the \textit{No Adapt} baseline, DeY-Net~\cite{wen2024denoising}, VPTTA~\cite{chen2024each}, NC-TTT~\cite{osowiechi2024nc}, and our method in polyp segmentation. The four rows from top to bottom display the final segmentation results for tests conducted on Sites A to D. Different colors represent the segmentation instances of different classes identified by the network.}
    \label{fig:vis_appendix_poloy}
\end{figure*}

Retinal fundus segmentation, in particular, presents a challenge due to the presence of two overlapping substructures. Lower clarity and contrast in images (e.g., rows 1 and 2 of Figure~\ref{fig:vis_appendix_fundus}) further complicate the model’s ability to accurately differentiate and segment these structures. By incorporating morphological priors of the organ within a multi-graph matching network, our method effectively learns robust substructure representations while minimizing domain-related noise. This approach overcomes issues like repeated, missing, or blurred edge pixels commonly seen in other methods, providing a more precise segmentation outcome.

The segmentation of polyps presents a greater challenge than that of retinal fundus imaging due to the highly variable appearance, with marked differences in shape, size, and color across domains. This variability demands precise, pixel-level classification from the network. Furthermore, we have not designated polyp segmentation as a single-object task; the model independently classifies and segments multiple classes during testing, using different colors to distinguish each segmented object in the visualization. As illustrated in Figure~\ref{fig:vis_appendix_poloy}, the masks generated by our method are in close alignment with expert annotations and effectively avoid pixel misclassification into different categories, a common issue in other methods.

\section{Additional Analysis}
\subsection{Effectiveness of the class-wise similarity matrix}
The class-wise similarity matrix $\bold{W}$ is introduced to mitigate category confusion in graphs caused by nodes belonging to different classes. Such confusion often results in mismatches, semantic deviations, and redundant computations. By reordering the adjacency matrix based on the labels $Y_i$ of each node $\mathcal{V}_i$, our method strengthens the capacity to identify and learn class-specific information during the source training phase.
To validate the above perspective, we conducted experiments comparing the final TTA segmentation results with and without $\bold{W}$ (denoted as with $\bold{W}$ and \textit{w/o} $\bold{W}$). As illustrated in Figure~\ref{fig:class_simi_ma}, \textit{w/o} $\bold{W}$ results in a measurable decline in DSC performance. Furthermore, we visualized the effect of \textit{w/o} $\bold{W}$ in multi-object segmentation scenarios, as shown in Figure~\ref{fig:class_w}. While the masks generated by the model closely align with the ground truth, the model misclassified the categories of two segmented instances.

\subsection{Effectiveness of Morphological Priors}
\begin{figure}[h]
    \centering
    \includegraphics[width=0.999\linewidth]{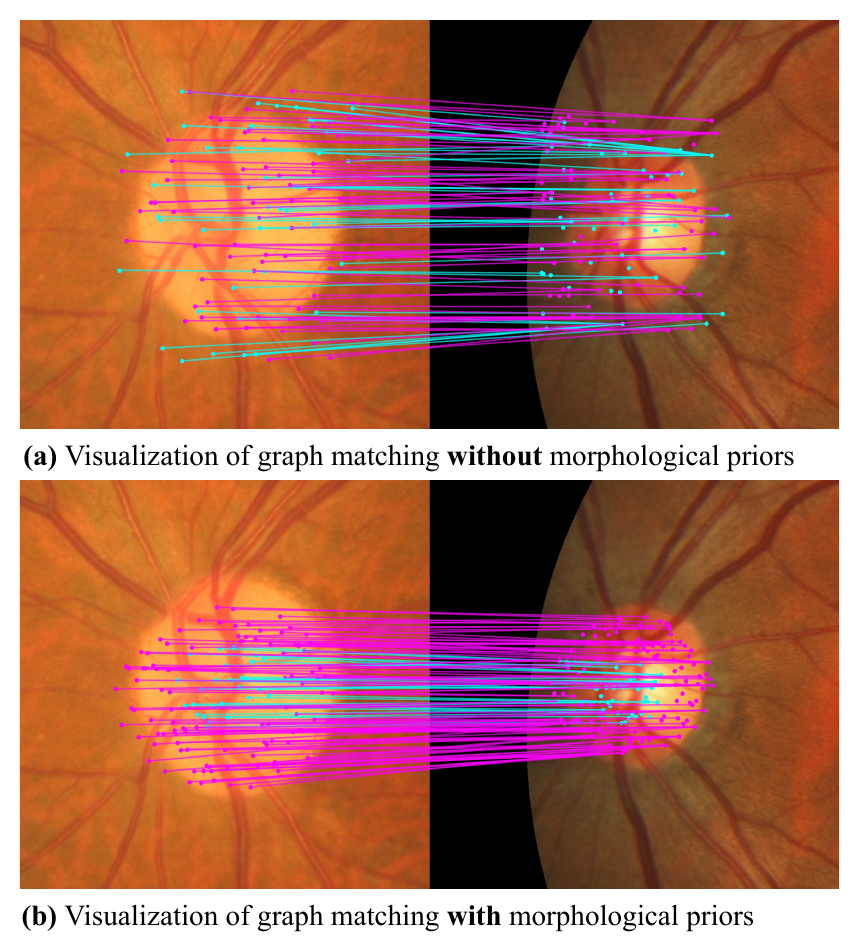}
    \caption{Visualization of graph pair matching.}
    \label{fig:re_visual}
\end{figure}
We visualized cross-site pairing without morphological priors, as shown in Fig. ~\ref{fig:re_visual}(a), and compared it with the results obtained after incorporating priors, as shown in Fig. ~\ref{fig:re_visual}(b). Without priors, the graph nodes were not correctly sampled within the corresponding organs, leading to mismatches. 
By introducing priors, this issue was effectively resolved, and multigraph matching ensured more stable pairing across multiple domains. For a quantitative evaluation of the impact of without priors, please refer to Table 4 in the main text.

\begin{figure}[!t]
    \centering
    \includegraphics[width=0.899\linewidth]{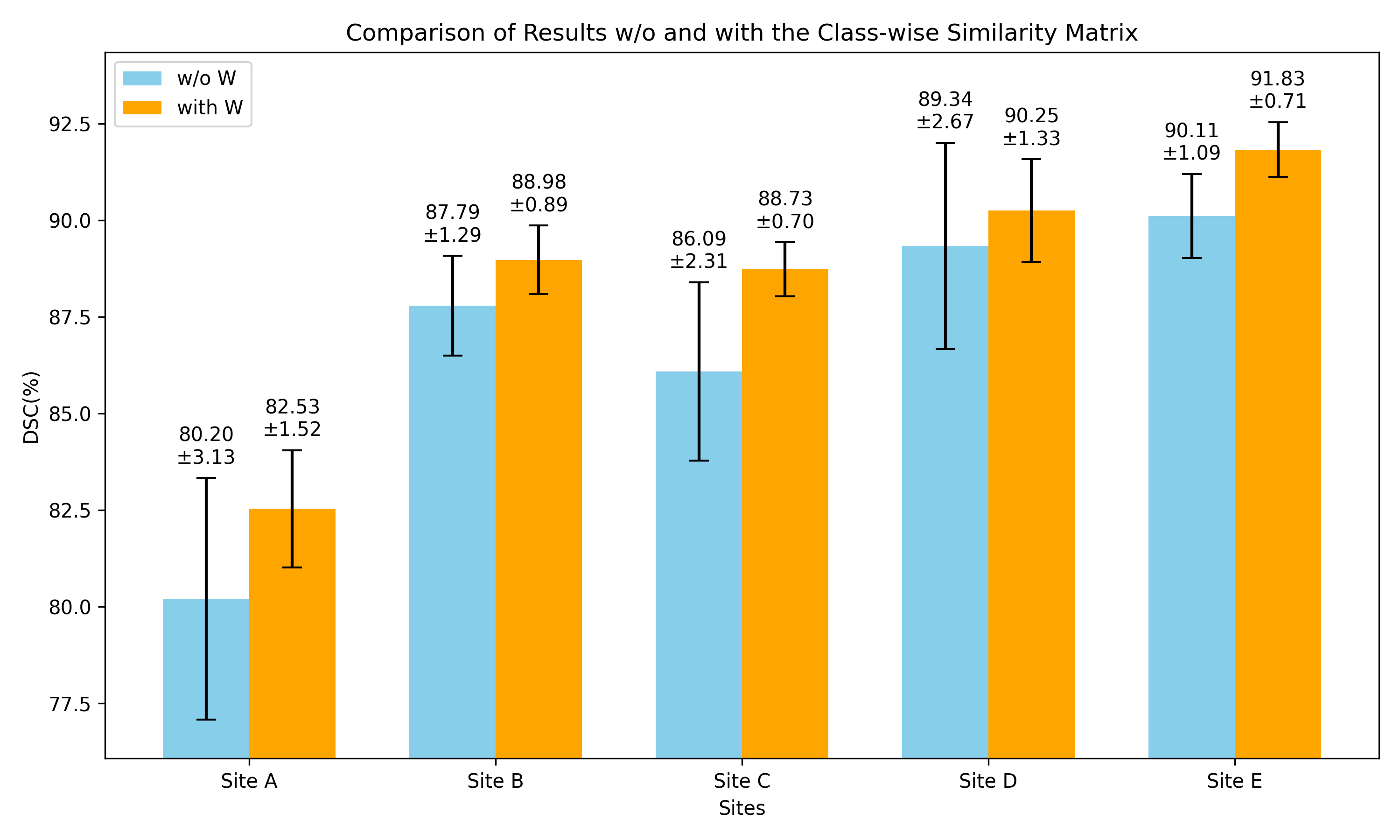}
    \caption{Ablation study on the impact of the Class-wise Similarity Matrix $\bold{W}$ in retinal fundus segmentation: comparison of results with and without (\textit{w/o}) $\bold{W}$.}
    \label{fig:class_simi_ma}
\end{figure}

\begin{figure}[!t]
    \centering
    \includegraphics[width=0.999\linewidth]{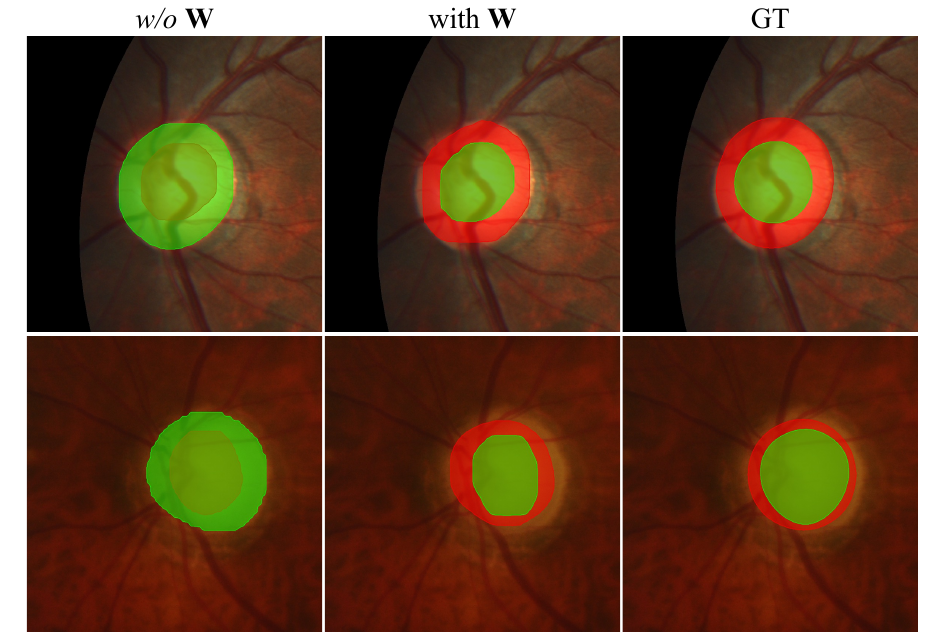}
    \caption{Visualization of segmentation results with and without (\textit{w/o}) the Class-wise Similarity Matrix $\bold{W}$.}
    \label{fig:class_w}
\end{figure}

\subsection{Impact of Batch Size on Segmentation}
\begin{table}[h!]
\centering
  \caption{Ablation study on batch size during TTA for retinal fundus segmentation. ``Avg. DSC" represents the average DSC across the five sites, while ``time" indicates the inference time per image.}
  \begin{adjustbox}{width=0.669\linewidth}
    \begin{tabular}{c|c|c|c}
\hlineB{3}
Batch Size & Avg. DSC & FLOPs (G) & time (s/img) \\ \hline
2 & 85.20 & 3.012 & 0.277 \\
4 & 88.46 & 4.255 & 0.392 \\
8 & 88.93 & 20.43 & 0.780 \\
16 & 89.15 & 80.96 & 1.831 \\
32 & 88.31 & 223.1 & 3.715 \\

\hlineB{3}
\end{tabular}
  \end{adjustbox}
  \label{tab:batchsize}
\end{table}

As shown in Table~\ref{tab:batchsize}, increasing the number of simultaneously matched graphs leads to a significant increase in both FLOPs and inference time, while the improvement in segmentation quality remains marginal. To achieve a balance between segmentation performance and computational efficiency, we set the mini-batch size to 4 during the TTA phase in retinal fundus datasets. However, this is a tunable hyperparameter rather than a fixed value, as it depends on factors such as the size of the segmented objects and the input image resolution. Empirically, we find that a mini-batch size between 4 and 8 provides an optimal trade-off.

\section{Limitations}
Unlike mainstream TTA methods that update only the Batch Normalization layers, our approach optimizes all network parameters during test time, achieving superior segmentation performance. However, the increased computational overhead limits deployment on portable devices, making efficiency optimization a key focus for future work.

In our experiments, we also observed that when both large and small organs are present, the model tends to perform better on larger organs while often overlooking smaller ones. This is due to the uniform sampling of foreground nodes, which can lead to diminished segmentation accuracy for small targets. To address this, we plan to incorporate stronger regularization in future work to better guide the sampling and learning of small structures.

Our method is well-suited for medical imaging compared to natural image tasks. In natural images, objects often exhibit significant variation due to intrinsic properties, motion, and state changes. 

\begin{algorithm*}
    \caption{\textbf{Source Training Phase per Mini-Batch}}
    \renewcommand{\algorithmicrequire}{\textbf{Input:}}     \renewcommand{\algorithmicensure}{\textbf{Output:}}
    
    \begin{algorithmic}[1]
        \ENSURE 
        $\mathcal{L}_{overall}$: The overall loss for training the segmentation network;

        $\mathcal{U}$: the pre-trained universe embeddings integrate morphological priors;

        \REQUIRE 
        $\{ x_i \in \mathbb{R}^{H\times W\times C} \}_{i=1}^m$: A batch of $m$ images from one or multiple domains;

        $\{ y_i \in [0,255] \cap \mathbb{Z} \}_{i=1}^m$: The ground truth masks corresponding to the input images;

        $E(\cdot)$: Feature extractor (ResNet-50);

        $S(\cdot)$: Segmentation head;

        $N$: The total classes number of segmentation organ;

        $\mathcal{U}$: Learnable universe embeddings;
        
        \textit{\textbf{(1) Segmentation Network Training.}}
        \STATE Get the visual feature maps: $f_i \leftarrow E(x_i)$.

        \STATE Get the predict segmentation masks: $\hat{y}_i \leftarrow S(f_i)$.

        \STATE Get the supervised loss: $\mathcal{L}_{sup} \leftarrow \text{CE}(\hat{y}_i, y_i)$, where CE is Cross Entropy Loss.

        \textit{\textbf{(2) Graph Construction.}}
        \FOR{each $i \in [1,m]$}
            \FOR{each object $n \in [1, N]$} 
                \STATE $\{f_{i,k}^n\}_{k=1}^{K} \leftarrow$ Extract feature maps for object $n$ from layers $1$ to $K$ based on $f_i$ and $y_i$.
            \ENDFOR
        
            \STATE Obtain object-specific features: $\{F_{i}^n\}_{n=1}^N \leftarrow Concat(f_{i,1}^n, \dots, f_{i,K}^n)$ for each $n$ in $N$.
        
            \STATE Build features of nodes and corresponding labels: $\{\mathcal{V}_i \in \mathbb{R}^{n_i \times h}, {Y}_i \in \mathbb{Z}^{n_i}\}_{i=1}^m \leftarrow \phi(\{F_i^n\}_{n=1}^N)$, where $\phi$ is the spatially-uniform sampling, and $n_i$ is the total number of nodes for $x_i$. 

            \STATE $ \mathcal{G}_i = (\mathcal{V}_i, \mathcal{A}_i)$, the weighted adjacency matrix $\mathcal{A}_i$ is obtained from Eq. (5).
        \ENDFOR

        \textit{\textbf{(3) Formulation of \textit{universe embeddings}.}}

        \IF{ $\mathcal{U}$ is not initialized}
        \STATE $\mathcal{U} = 1/d + 10^{-3}z$, where $z \sim N(0, 1)$.
        \ENDIF
        \STATE \textit{Universe matching matrices}: $\bold{U} = [U_1^{\mathsf{T}}, \cdots, U_m^{\mathsf{T}}]^{\mathsf{T}}$, where $U_i = Sinkhorn(\mathcal{V}_i \ \mathcal{U}^{\mathsf{T}}, \tau) \in \mathbb{U}_{n_i d}$, $d$ is the \textit{universe size}.

        \STATE Block-diagonal multi-adjacency matrix: $\bold{A} = diag(\mathcal{A}_1, \cdots, \mathcal{A}_m)$. 

        \STATE Compute the class-aware similarity matrix: $\bold{\tilde{A}} = \bold{W}^{\mathsf{T}} \bold{A}\bold{W}$, where $\bold{W} = [W_{ij}]_{ij}$, and $W_{ij} = {Y}_i {Y}_j^{\mathsf{T}}$. 

        \STATE HiPPI solving for stable convergence of $\bold{U}$ as in Eqs. (6-8).

        \STATE Update $\mathcal{U}$ with $L(\mathcal{U})$ in Eq. (9).

        \textit{\textbf{Overall Loss of Source Training.}}
        \STATE $\mathcal{L}_{overall}=\mathcal{L}_{sup}+L(\mathcal{U})$.

    \end{algorithmic}
    \label{algo:train}
\end{algorithm*}

\begin{algorithm*}
\caption{\textbf{Higher-order Projected Power Iteration (HiPPI)~\cite{bernard2019hippi}}}
    \renewcommand{\algorithmicrequire}{\textbf{Input:}}     \renewcommand{\algorithmicensure}{\textbf{Output:}}
    
    \begin{algorithmic}[1]
        \ENSURE 
        Cycle-consistent universe-matching $\bold{U}_t$.

        \REQUIRE 
        $W$: multi-graph similarity matrix;

        $\bold{U}_0$: initial universe-matching $\bold{U}_0 \in \mathbb{U}_{n_id}$;

        \STATE \textbf{Initialise:} $t\leftarrow 0$, $\text{proj} \leftarrow Sinkhorn$.
        \REPEAT
        \STATE \hspace{1em} $V_t \leftarrow W \bold{U}_t \bold{U}_t^{\mathsf{T}} W \bold{U}_t$.
        \STATE \hspace{1em} $\bold{U}_{t+1} \leftarrow \text{proj}(V_t)$.
        \STATE \hspace{1em} $t\leftarrow t+1$.
        \UNTIL{ $|| U^t_i - U^{t-1}_i|| < 10^{-5}$}
    \end{algorithmic}
\label{algo:hippi}
\end{algorithm*}

\begin{algorithm*}
    \caption{\textbf{Test-time Adaptation Phase per Mini-Batch}}
    \renewcommand{\algorithmicrequire}{\textbf{Input:}}     \renewcommand{\algorithmicensure}{\textbf{Output:}}
    
    \begin{algorithmic}[1]
        \ENSURE 
        $\{y^{*}_i \in \mathbb{R}^{H\times W\times C} \}_{i=1}^m$: The predicted masks of test dataset.

        \REQUIRE 
        $\{ x_i^t \in \mathbb{R}^{H\times W\times C} \}_{i=1}^m$: A batch of $m$ images from test dataset;


        $E(\cdot)$: Pre-trained feature extractor (ResNet-50);

        $S(\cdot)$: Pre-trained segmentation head;

        $N$: The total classes number of segmentation organ;

        $\mathcal{U}$: Pre-trained learnable universe embeddings;

        \textit{MIter}: Max iteration of multi-graph matching.

        \STATE \textbf{Initialise:} \textit{Iter}$\leftarrow 0$. ${V}_i \leftarrow 0$, $\forall i \in [m]$. ${V}_i$ is the gradient of Eq. (4) with respect to ${U}_i$. $\tau \leftarrow 0.05$.

        \STATE Obtain the visual feature maps: $f_i^t \leftarrow E(x_i^t).$

        \STATE Obtain the pseudo segmentation masks: $\hat{y}^t_i \leftarrow S(f_i^t).$

        \textit{\textbf{(1) Graph Construction.}}

        \STATE $ \mathcal{G}_i^t = (\mathcal{V}_i^t, \mathcal{A}_i^t)$, where $\mathcal{V}_i^t$ and $\mathcal{A}_i$ are obtained in the same manner as in Algorithm~\ref{algo:train}. (2).

        \textit{\textbf{(2) Unsupervised Multi-graph Matching.}}

        \STATE Universe matching of $\mathcal{G}_i$: $U_i = Sinkhorn(\mathcal{V}_i^t \ \mathcal{U}^{\mathsf{T}}, \tau)$.

        \FOR{ $(\mathcal{G}_i, \mathcal{G}_j)$ in $\{ \mathcal{G}_1, \cdots, \mathcal{G}_m \}$}
        \STATE Affinity matrix $M_{ij} \leftarrow f_{mlp}\{\mathcal{V}_i^t \mathcal{W}_{x}^t\cdot(\mathcal{V}_j^t \mathcal{W}_{y}^t)^{\mathsf{T}}\}$, where $\mathcal{W}_{x}^t$ and $\mathcal{W}_{y}^t$ are two learnable linear projection, $f_{mlp}$ is a multi-layer perception (MLP).

        \REPEAT

        \STATE $V_i \leftarrow V_i + (\lambda \mathcal{A}_i^t U_i U_j^{\mathsf{T}} \mathcal{A}_j^t U_j + M_{ij} U_j)$.

        \STATE Relax $V_i$ to the space of universe: $U_i \leftarrow sinkhorn(V_i, \tau)$. 

        \UNTIL{$\{U_i\}$ \textit{converged} OR \textit{Iter} $>$ \textit{MIter}}

        \ENDFOR
        
    \STATE Fine-tune the segmentation network (update $E$ and $S$) using $L_{mat}$ in Eq. (12).

    \textit{\textbf{(3) Final Inference Process.}}
    
    \STATE  $y^{*}_i = S(E(x_i))$

    \end{algorithmic}
    \label{algo:test}
\end{algorithm*}


    

\end{document}